\documentclass{article} % For LaTeX2e
\usepackage{paperstyle,times}

\usepackage{amsmath,amsfonts,bm}

\def\eqref#1{equation~\ref{#1}}
\def\plaineqref#1{\ref{#1}}
\def\1{\bm{1}}

\def\eps{{\epsilon}}

\DeclareMathAlphabet{\mathsfit}{\encodingdefault}{\sfdefault}{m}{sl}
\SetMathAlphabet{\mathsfit}{bold}{\encodingdefault}{\sfdefault}{bx}{n}

\newcommand{\E}{\mathbb{E}}

\newif\ifcifarresults
\cifarresultstrue

\newcommand{\CIFARConvClean}{69.20$\pm$0.39}
\newcommand{\CIFARConvASRFive}{82.50$\pm$3.36}
\newcommand{\CIFARConvASRTen}{93.42$\pm$2.07}
\newcommand{\CIFARConvRandFive}{4.32$\pm$0.92}
\newcommand{\CIFARConvRandTen}{8.59$\pm$1.66}
\newcommand{\CIFARSEWClean}{72.02$\pm$0.33}
\newcommand{\CIFARSEWASRFive}{46.23$\pm$1.73}
\newcommand{\CIFARSEWASRTen}{64.98$\pm$3.60}
\newcommand{\CIFARSEWRandFive}{1.70$\pm$0.10}
\newcommand{\CIFARSEWRandTen}{2.11$\pm$0.11}
\newcommand{\CIFARGRUClean}{55.70$\pm$0.35}
\newcommand{\CIFARGRUASRFive}{97.72$\pm$0.47}
\newcommand{\CIFARGRUASRTen}{98.98$\pm$0.22}
\newcommand{\CIFARGRURandFive}{11.67$\pm$0.51}
\newcommand{\CIFARGRURandTen}{22.04$\pm$1.46}
\newcommand{\CIFARTransformerClean}{69.83$\pm$1.26}
\newcommand{\CIFARTransformerASRFive}{78.64$\pm$0.28}
\newcommand{\CIFARTransformerASRTen}{86.00$\pm$0.39}
\newcommand{\CIFARTransformerRandFive}{3.71$\pm$0.65}
\newcommand{\CIFARTransformerRandTen}{5.10$\pm$1.19}

\usepackage{hyperref}
\hypersetup{hidelinks}
\usepackage{url}
\usepackage{amsmath,amssymb,amsthm}
\usepackage{booktabs}
\usepackage{placeins}
\usepackage{float}
\usepackage{multirow}
\usepackage{graphicx}
\usepackage{algorithm}
\usepackage{algpseudocode}
\newtheorem{theorem}{Theorem}
\newtheorem{proposition}{Proposition}
\newtheorem{corollary}{Corollary}

\newcommand{\Dt}{\Delta}
\newcommand{\Null}{\operatorname{Null}}

\title{Stealth Is a Relation, Not a Property:\\
How Event Representations Create Blind Spots for Timing Attacks in Event-Based Perception}

\author{
\hspace*{-\tabcolsep}Shoaib Ahmed Dipu$^{1}$, Md.~Shaown Miah$^{2}$, Kamrul Hasan$^{3}$, Sayeed Shafayet Chowdhury$^{1}$ \\[4pt]
\hspace*{-\tabcolsep}{\normalfont\normalsize $^{1}$Indiana University Indianapolis \quad $^{2}$Bangladesh University of Engineering and Technology} \\
\hspace*{-\tabcolsep}{\normalfont\normalsize $^{3}$Tennessee State University} \\[2pt]
\hspace*{-\tabcolsep}{\normalfont\small\texttt{\{shdipu,\,saychow\}@iu.edu} \quad \texttt{1918018@bme.buet.ac.bd} \quad \texttt{mhasan1@tnstate.edu}}
}

\finalcopy % show the author block

\newcommand{\blfootnote}[1]{%
  \begingroup
  \renewcommand{\thefootnote}{}\footnote{#1}%
  \addtocounter{footnote}{-1}%
  \endgroup
}

\begin{document}

\maketitle
% No venue banner or header rule in the running head.
\lhead{}\chead{}\rhead{}
\renewcommand{\headrulewidth}{0pt}

\blfootnote{Source code is available at \url{https://github.com/shoaibdipu/Stealth_Is_a_Relation_Not_a_Property}.}
\begin{abstract}
An event camera produces an asynchronous stream, but what is visible in that stream depends on how a downstream consumer, such as a model or detector, processes time. The same timestamp change may leave a coarse temporal representation unchanged while changing the response of a model that preserves finer timing. This creates blind spots that timing attacks can exploit. We characterize this dependence as observer-relative stealth. For recorded event streams, retiming an event within its protected accumulation window leaves the accumulated integer tensor exactly unchanged. Any consumer that receives only this tensor, therefore, sees the same input before and after the attack. We use this exact blind space to construct Null, a gradient-guided timestamp-retiming attack, and define $\mathrm{SC\text{-}ASR}_A(\tau)$ to measure attack success while bounding the change visible to observer $A$. Across five event-vision datasets, the exact blind space contains retimings that remain exactly hidden from the protected observer while causing victim failure. On DVS Gesture at a 10\% event budget, Null reaches $81.56\pm5.81\%$ ASR on ConvSNN and $98.67\pm0.45\%$ on a GRU while preserving the protected tensor exactly. On DailyDVS-200, a protocol-scale Multi-View Fusion Network variant reaches $99.28\pm0.11\%$ exact-null ASR, compared with $9.70\pm1.06\%$ for its matched control. In a five-attack comparison, Null is the only method with nonzero attack success at exact observer equality, reaching $81.4\%$ on DVS Gesture and $87.35\%$ on DailyDVS-200. We also search the same exact blind space with an independently implemented constrained projected-gradient optimizer, C-PGD. At matched victim-gradient evaluations, C-PGD reaches $84.50\pm2.89\%$ ASR on DVS Gesture and $89.55\pm4.39\%$ on DailyDVS-200, again with exact protected equality. Both optimization procedures find retimings with high attack success within the exact same blind space. We then change the temporal observer and test the same attacks. Perturbations that are exactly hidden from the protected observer become visible under shifted, finer, overlapping, and randomized temporal views. Adding observer constraints reduces the real-valued blind-space fraction from $87.5\%$ to $75.0\%$ to $62.5\%$, while DVS ConvSNN ASR falls from $74.9\%$ to $61.9\%$ to $37.2\%$. These results show that stealth is not a property of the perturbation alone. It depends on the temporal information available to the observer.
\end{abstract}

\section{Introduction}
\label{sec:intro}

Event cameras produce asynchronous streams in which precise timing carries information~\citep{gallego2020survey,gehrig2019est,peng2023get,sabater2023evtplus,zubic2024ssm}. The same stream can, however, be processed at different temporal resolutions. A temporal model may preserve fine event timing, while another component first accumulates events into coarser frames. The two consumers can therefore receive different information from the same recorded events.

We study the security consequence of this mismatch. A timestamp perturbation can change a fine-time victim while leaving a declared coarse temporal observer exactly unchanged. Prior work establishes timestamps as an attack surface~\citep{yu2026retiming,lee2022aead,marchisio2021dvsattacks,buchel2022eventadv,bu2023rga,yao2024raw,lin2025physical}. Count-preserving temporal transformations can also disappear after temporal aggregation in a poisoning setting~\citep{temporalpoison2026}. We make the observer explicit and study whether its blind space contains test-time retimings that cause victim errors under finite event budgets.

Let $A$ denote the temporal representation available to a downstream consumer. Exact stealth to $A$ means
\begin{equation}
A(X')=A(X).
\label{eq:intro-equality}
\end{equation}
Any computation restricted to $A(X)$ receives exactly the same input before and after the attack. A different observer may still receive different information from the same pair of streams. Stealth therefore depends on the temporal information available to the observer.

Coarse temporal accumulation gives a direct discrete construction of this blind space. Moving an event to another fine bin within the same protected window leaves the accumulated integer count exactly unchanged. We search this feasible set with \emph{Null}, a gradient-guided timestamp-retiming attack that keeps spatial location and polarity fixed. Across N-MNIST, DVS128 Gesture, CIFAR10-DVS, N-Caltech101, and DailyDVS-200, the resulting exact blind space contains retimings that cause victim errors. We observe the same behavior on Spiking Heidelberg Digits (SHD), an auditory event-stream benchmark.

DVS Gesture illustrates the effect. At a 10\% event budget, exact-null attack success rate (ASR) is $81.56\pm5.81\%$ for a convolutional spiking neural network (ConvSNN), $34.06\pm3.69\%$ for a spike-element-wise residual network (SEW-ResNet)~\citep{fang2021sew}, $50.96\pm3.27\%$ for our temporal Transformer~\citep{vaswani2017attention}, and $98.67\pm0.45\%$ for a temporal gated recurrent unit (GRU)~\citep{cho2014gru}. The protected accumulated tensor remains exactly unchanged in every constrained row. When each moved event is restricted to one fine bin, ASR remains $34.7/17.3/58.0/98.7\%$ across the same victims. We report timestamp displacement separately because exact representation equality and physical-time displacement describe different properties of the perturbation.

We next test whether this vulnerability depends on Null's greedy search. Constrained projected gradient descent (C-PGD) provides an independent optimizer over the same unrestricted within-window exact-null feasible set. At matched victim-gradient evaluations on DVS Gesture, C-PGD reaches $84.50\pm2.89\%$ ASR and Null reaches $81.56\pm5.81\%$. Both satisfy $D_A=0$, and additional C-PGD steps increase ASR further. A supporting DailyDVS-200 comparison shows the same qualitative result, with the low clean accuracy of that ConvSNN reported explicitly in Section~\ref{sec:exactnullfair}. Both search procedures find retimings that cause victim errors inside the same exact blind space.

Raw ASR alone does not describe what the observer can see. We define the stealth-constrained attack success rate (SC-ASR)
\begin{equation}
\mathrm{SC\text{-}ASR}_A(\tau)=
\Pr[\text{victim failure}\wedge D_A(X,X')\le\tau\mid\text{clean-correct}],
\label{eq:intro-scasr}
\end{equation}
where $\tau=0$ requires exact protected equality. We report the detector-specific undetected attack success rate (UASR) separately. These quantities distinguish victim failure, representation visibility, and detector evasion. In the five-attack DVS study, Yu and Free achieve high UASR despite nonzero deviation in the protected representation. Null satisfies $D_A=0$ exactly while retaining substantial detector-specific UASR. This separates exact observer-level invisibility from empirical evasion of a particular detector.

An attack hidden from one temporal observer need not remain hidden when the temporal view changes. Canonical-null attacks have zero canonical deviation but become visible under shifted, half-width, overlapping, and randomized temporal views on DVS Gesture and DailyDVS-200. Requiring equality to additional fixed observers reduces the real-valued blind fraction from $87.5\%$ to $75.0\%$ to $62.5\%$. DVS ConvSNN ASR falls from $74.9\%$ to $61.9\%$ to $37.2\%$. Temporal information discarded by one observation can reappear under another.
\vskip 0.25mm
\textbf{Main contributions.}
\textbf{(i) Observer-relative stealth.} We formulate test-time stealth relative to a declared temporal observer and separate raw ASR, $\mathrm{SC\text{-}ASR}_A(\tau)$, and detector-specific UASR. \textbf{(ii) Exact temporal blind space.} We construct exact integer timestamp retimings for coarse temporal accumulation and show that this blind space contains retimings that cause victim errors across datasets and victim families. \textbf{(iii) Independent search validation.} Null and C-PGD independently find retimings with high attack success within the same exact-null feasible set. \textbf{(iv) Observer-dependent visibility.} Changing and intersecting temporal observers changes which perturbations remain hidden, linking observer choice to attack visibility and the size of the common blind space.

\begin{figure}[t]
\centering
\includegraphics[width=\linewidth]{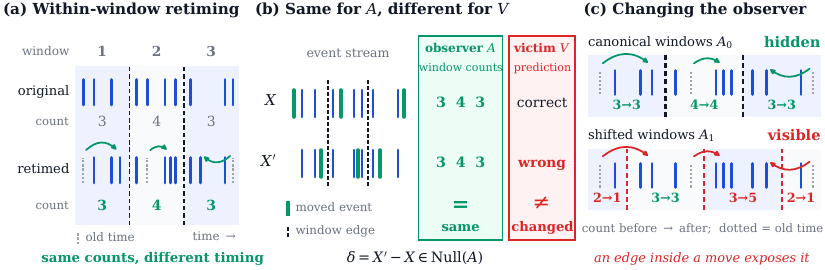}
\caption{\textbf{Observer-relative stealth.} (a) Within-window retiming
preserves window counts. (b) The canonical observer $A$ sees no change,
while a fine-time victim $V$ can. (c) Shifted windows expose the same
perturbation that the canonical windows $A_0 = A$ hide.}
\label{fig:story}
\end{figure}
\section{Related work and novelty boundary}
\label{sec:related-main}

\noindent\textbf{Event-stream attacks.}
Event-stream attacks modify event values, spatial structure, or timestamps~\citep{marchisio2021dvsattacks,lee2022aead,buchel2022eventadv,bu2023rga,yao2024raw,lin2025physical}. They have been studied with models ranging from asynchronous processing to token and transformer representations~\citep{sekikawa2019eventnet,sabater2022evt,jiang2022eventtransformer,peng2023get,sabater2023evtplus}. Sparse Dynamic Attack (SDA), using potential-dependent surrogate gradients (PDSG), targets sparse perturbations of dynamic spiking neural network (SNN) inputs~\citep{lun2025pdsg,neftci2019surrogate,zhou2023spikformer}. Yu et al.~\citep{yu2026retiming} optimize spike retiming under tamper, displacement, and capacity constraints. Temporal Poisoning~\citep{temporalpoison2026} studies a clean-label training-time backdoor whose timestamp transformation preserves per-pixel and per-polarity counts and disappears after temporal aggregation. Our setting is test-time observer-constrained evasion. We define the observer $A$, measure exact and tolerance-based stealth, compare independent optimizers within the same exact-null feasible set, and evaluate the same attacks under different temporal observers.

\noindent\textbf{Stealth and temporal representations.}
Adversarial ML distinguishes attack success from perceptual and detector-level stealth~\citep{frederickson2018dilemma,franzmeyer2024illusory}. Our $\mathrm{SC\text{-}ASR}_A(\tau)$ measures attack success subject to what observer $A$ sees, while $\mathrm{UASR}_\alpha$ measures evasion of a particular detector. These measures capture different properties. An attack can evade a detector while changing the protected representation. Exact $A$-stealth instead guarantees that the declared observer receives the same representation before and after the attack. This distinction matters because event representations retain different amounts of temporal information, from reconstructed intensity sequences and sparse asynchronous processing to task-specific encodings~\citep{rebecq2019hdr,tulyakov2021timelens,zhang2020darkevents,cadena2021spade}. Driving and optical-flow pipelines can retain fine timing~\citep{gehrig2021dsec,gehrig2021eraft}, while converted neuromorphic datasets expose timestamped events directly~\citep{orchard2015nmnist}. We therefore define stealth relative to a named observer instead of a single generic event representation.

\noindent\textbf{Representation invariance.}
Null spaces, Fourier zeros, and Poisson count laws describe information discarded by a temporal observer. For recorded streams, exact integer equality defines the feasible set used by Null and C-PGD and makes observer-level stealth directly measurable. Changing, refining, randomizing, or intersecting temporal observers can expose information hidden from the original observer. Appendix~\ref{app:related} gives the expanded comparison.

\section{Observer-relative stealth and blind spaces}
\label{sec:theory}
Let $R$ be the representation available to an observer. We call $X'$ \emph{exactly stealthy to $R$ at $X$} when $R(X')=R(X)$. For a linear observer $A$, perturbations in $\Null(A)$ form its blind space. Such perturbations can change the underlying event stream without changing the representation seen by the observer.

\begin{proposition}[Observer-restricted detection]\label{prop:detection-impossibility}
If $R(X')=R(X)$, any detector restricted to $R$ receives the same input for the paired clean and attacked samples. A deterministic detector gives the same output. With independent detector randomness, the paired information alone cannot raise the true-positive rate above the false-positive rate.
\end{proposition}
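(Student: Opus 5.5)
The argument is a factorization through $R$. First I will formalize "restricted to $R$": a detector is a measurable map $D: (r,\omega) \mapsto \{0,1\}$. Here $r = R(\cdot)$ is the observed representation, and $\omega$ is detector randomness drawn from a distribution $\mu$ that does not depend on which sample (clean or attacked) is presented. The first claim is then immediate. The detector's input on the clean sample is $R(X)$ and on the attacked sample is $R(X')$, and these are equal by hypothesis. So the two inputs are identical as objects, which is all that "receives the same input" asserts.

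For a deterministic detector, $D$ does not depend on $\omega$. Hence $D(R(X')) = D(R(X))$ by substitution, and the outputs coincide pointwise on the pair.

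For the randomized case, I fix the pair $(X,X')$ and consider the decisions $D(R(X),\omega)$ on the clean sample and $D(R(X'),\omega')$ on the attacked sample. The draws $\omega,\omega'$ each have law $\mu$, independent of the sample identity. The true-positive probability on this pair is $\Pr_{\omega'}[D(R(X'),\omega')=1]$, and the false-positive probability is $\Pr_{\omega}[D(R(X),\omega)=1]$. Since $R(X')=R(X)$, both are the same function of the same argument under the same law, so TPR $=$ FPR on the pair. If the statement is read over a distribution of pairs, all of whose members satisfy the equality, I will integrate this pairwise equality over that distribution to get TPR $=$ FPR in aggregate, and in particular TPR cannot exceed FPR.

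The only delicate step is making "independent detector randomness" precise. The randomness must not be coupled to the label (clean vs.\ attacked), because otherwise a detector could trivially read the label off $\omega$. I would state explicitly that $\omega \sim \mu$ is independent of the sample's identity. Under that assumption the rest is substitution of equals, so no real obstacle remains.
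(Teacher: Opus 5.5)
Your proposal is correct and follows the paper's argument: write the detector as $D(R(\cdot),U)$ and substitute $R(X')=R(X)$ to get identical outputs, both for a deterministic detector and for each fixed realization of the randomness; independent randomness with a common law then gives identical output distributions, so TPR equals FPR. Your explicit requirement that the randomness be independent of the clean/attacked label, and your integration over a population of pairs, are reasonable formalizations that the paper leaves implicit.
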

\noindent\textit{Proof.} The claim follows directly from equality of the detector inputs. If $R(X')\neq R(X)$, attack-dependent information is present, although a particular detector may still fail to use it. The formal statement and proof are in Appendix~\ref{app:proofs}.

\paragraph{Exact invariance for recorded events.}
Let $X\in\mathbb N^{T\times C\times H\times W}$ be a fine-bin event tensor. Assume $T=KS$, giving $K$ coarse windows, and let $A$ sum blocks of $S$ consecutive bins. Moving one event from bin $s$ to $s'$ within the same coarse window, while preserving location and polarity, gives $\delta=e_{s'}-e_s$. For several legal moves, write $\Delta=\sum_j\delta_j$.

\begin{proposition}[Exact discrete blind space]\label{prop:discrete-null}
Any finite collection of legal within-window retimings satisfies $A\Delta=0$, so $A(X+\Delta)=AX$ exactly in integer arithmetic. With all aggregation rows present, $\dim\Null(A)=KCHW(S-1)$.
\end{proposition}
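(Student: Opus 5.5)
The argument rests on one observation: $A$ acts independently on each spatial–polarity channel and each coarse window. Concretely, for a fixed location $(c,h,w)$ and window $k$, the $k$-th output entry is the sum of the $S$ fine bins $X[kS+1{:}(k+1)S,c,h,w]$. So $A$ is block diagonal, with $KCHW$ identical blocks, each equal to the all-ones row vector $\1_S^\top\in\mathbb R^{1\times S}$. I would first write this out, identifying $A$ with $I_{KCHW}\otimes \1_S^\top$ after reordering coordinates so that the $S$ fine bins of each window are contiguous.

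For the first claim, take one legal move $\delta=e_{s'}-e_s$, where $s$ and $s'$ index fine bins in the same window $k$ at the same $(c,h,w)$. Then $\delta$ is supported in a single block, and on that block $\1_S^\top\delta=1-1=0$. Every other block sees zero. Hence $A\delta=0$ as an integer vector. By linearity over $\mathbb Z$, $A\Delta=\sum_j A\delta_j=0$, and so $A(X+\Delta)=AX+A\Delta=AX$ exactly, with no rounding involved.

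Two small points should be noted here. First, legality means an event actually exists at bin $s$ when it is moved. This guarantees $X+\Delta\in\mathbb N^{T\times C\times H\times W}$, so the attacked tensor is a valid event tensor. Second, a sequence of moves may move the same event several times; this does not matter, since only the sum $\Delta$ enters the argument.

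For the dimension claim, I would view $A$ as a real linear map $\mathbb R^{KSCHW}\to\mathbb R^{KCHW}$ and apply rank–nullity. Because the blocks act on disjoint coordinates, the rank of $A$ is the sum of the block ranks. Each block $\1_S^\top$ is nonzero and so has rank $1$, which gives $\operatorname{rank}A=KCHW$ when all aggregation rows are present. Therefore $\dim\Null(A)=KSCHW-KCHW=KCHW(S-1)$.

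As a complementary check, a basis of the null space is given by the vectors $e_{kS+i}-e_{kS+1}$ for $i=2,\dots,S$, taken in each block. These vectors are exactly the single legal moves into bin $kS+1$. This shows that the integer legal moves span $\Null(A)$ over $\mathbb R$, which ties the discrete construction to the real blind space.

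The step I expect to need the most care is the statement about dimension. It is a statement about the real null space, not about the integer feasible set, so the proof must keep the two separate. The phrase ``all aggregation rows present'' should be read as the condition that $A$ has full row rank $KCHW$. If some windows or channels were dropped, the rank would drop and the nullity would grow correspondingly, so I would state that hypothesis explicitly.
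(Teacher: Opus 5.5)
Your proposal is correct and follows essentially the same route as the paper: the single-move identity $\mathbf{1}_S^\top(e_{s'}-e_s)=0$ on the one affected aggregation row, linearity for finite collections, and rank--nullity once the $KCHW$ rows are seen to have disjoint supports (your block-diagonal $I_{KCHW}\otimes\mathbf{1}_S^\top$ description is the same observation). You also keep the same caveat as the paper, namely that the dimension count is over $\mathbb{R}$ and not for the integer feasible set. One small slip in your optional basis remark: under the convention $\delta=e_{s'}-e_s$, the vector $e_{kS+i}-e_{kS+1}$ moves an event \emph{out of} bin $kS+1$, not into it, though this does not affect the spanning claim.
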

\noindent\textit{Proof.} Each move removes and adds one count within the same aggregation row, so its row sum is zero. Linearity gives $A\Delta=0$ for any collection of legal moves. Each length-$S$ aggregation row contributes one constraint and $S-1$ blind degrees of freedom, giving the stated nullity. This exact integer equality defines the feasible set used by Null and C-PGD. The complete proof is in Appendix~\ref{app:proof-discrete-null}.

\paragraph{Changing the observer.}
For linear observers $A_1,\ldots,A_M$,
\begin{proposition}[Intersection of blind spaces]\label{prop:intersection}
\begin{equation}
\bigcap_i\Null(A_i)=\Null([A_1^\top\;\cdots\;A_M^\top]^\top).
\label{eq:joint-null}
\end{equation}
\end{proposition}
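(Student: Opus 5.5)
The plan is to prove the set equality by double inclusion, using only the definition of the null space and the block structure of the stacked matrix. First I fix the setting. All $A_i$ act on the same ambient space, $\mathbb R^{n}$ (or $\mathbb Z^n$ for the integer observers of Proposition~\ref{prop:discrete-null}), with $A_i\in\mathbb R^{m_i\times n}$. The stacked operator $B=[A_1^\top\;\cdots\;A_M^\top]^\top$ is then a well-defined $(\sum_i m_i)\times n$ matrix. The one observation needed is that, for any $\Delta$, the product $B\Delta$ is the concatenation of the blocks $A_1\Delta,\ldots,A_M\Delta$. This follows directly from row-block multiplication.

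For the inclusion $\subseteq$, I take $\Delta$ in every $\Null(A_i)$. Each block $A_i\Delta$ is then zero, so the concatenated vector $B\Delta$ is zero and $\Delta\in\Null(B)$. For the inclusion $\supseteq$, I take $\Delta\in\Null(B)$. A concatenated vector is zero exactly when every block is zero, so $A_i\Delta=0$ for each $i$, and $\Delta$ lies in the intersection. Neither direction uses the field structure, so the same argument holds verbatim over $\mathbb Z$. This matters because the integer feasible sets used by Null and C-PGD need exactly that case.

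I expect no genuine obstacle. The only point needing care is making sure all $A_i$ share a common domain, for example shifted or finer temporal views expressed on the same fine-bin tensor space. Otherwise the stacking is undefined. I would state this as the standing assumption. As a remark, I would note the consequence used later: $\dim\bigcap_i\Null(A_i)=n-\operatorname{rank}B\le\min_i\dim\Null(A_i)$. Hence adding observers can only shrink the common blind space, and it does so strictly whenever a new observer's rows are not in the row span of the previous ones.
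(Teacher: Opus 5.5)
Your proposal is correct and follows the paper's argument: $A_i\delta=0$ for every $i$ holds exactly when the stacked operator annihilates $\delta$. Your closing remark that appending rows cannot raise the null-space dimension is the same observation the paper adds after the equality.
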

\noindent\textit{Proof.} A perturbation lies in the null space of every $A_i$ exactly when their stacked operator annihilates it. Adding observer constraints can only preserve or shrink the common blind space. A retiming hidden from one observer can become visible when another observer retains different temporal information. Section~\ref{sec:defense} tests this prediction with shifted, finer, overlapping, randomized, and intersected observers. The full argument is given in Appendix~\ref{app:proofs}.

\paragraph{Continuous and stochastic views.}
The same observer-relative structure also appears beyond the discrete construction. For a linear temporal kernel, periodic rate perturbations vanish at zeros of its frequency response. Under an inhomogeneous Poisson model, preserving the integrated rate in each disjoint window preserves the joint frame-count distribution. Randomizing accumulation width removes fixed boxcar spectral nulls under the stated conditions. These results require additional assumptions and are separate from the exact recorded-event guarantee used by Null and C-PGD. Their statements, assumptions, and proofs are given in Appendix~\ref{app:proof-kernel} onward.

\section{Null-space attack and three-level evaluation}
\label{sec:attack}

\subsection{Attack construction}
\paragraph{Legal retiming inside the blind space.}
Let an event occupy fine bin $s$ in group $q=(k,c,y,x)$, where $k$ indexes the protected coarse window. A legal destination $s'$ remains in the same $k$ and keeps spatial location and polarity fixed. Let $n_{\mathrm{ev}}$ denote the number of original events. With event budget $b$, at most $\lfloor b n_{\mathrm{ev}}\rfloor$ original events are moved, and each original event is used at most once. For victim loss $\mathcal L$ and $G=\nabla_X\mathcal L$, the first-order gain of a legal move is
\[
\Gamma(q,s\!\to\!s')=G_{q,s'}-G_{q,s}.
\]
We greedily choose high-gain legal moves and periodically refresh the gradient. Every move contributes $e_{s'}-e_s$ within one aggregation row, so the feasible set guarantees $A(X')=A(X)$ exactly. The optimization searches inside the protected blind space instead of balancing attack loss against a similarity penalty.

\paragraph{Independent exact-null optimizer.}
To separate the effect of the feasible set from Null's greedy search, we also use constrained projected gradient descent (C-PGD)~\citep{madry2018pgd}. C-PGD maintains a continuous grouped event tensor. It removes the row-common gradient component, takes normalized projected-gradient steps, and projects each aggregation row back to its fixed event-count simplex. An integer transport projection then converts the relaxed solution back to a timestamp-only event stream under the requested moved-event budget. The resulting stream preserves location and polarity, uses the same unrestricted within-window legal moves as Null, and satisfies $A(X')=A(X)$ exactly. With one C-PGD step per progressive budget stage, the number of victim-gradient evaluations through 10\% matches Null. Both use three evaluations on DVS Gesture and four on DailyDVS-200. Larger step counts are reported only as optimization-strength sensitivity. Appendix~\ref{app:cpgd-spec} gives pseudocode and the integer-projection details.

\paragraph{Controls and comparison attacks.}
We use budget-matched random retiming and, where feasible, an exact signed-displacement-matched control. The latter reproduces each optimized sample's signed shift multiset on randomly chosen legal events. This separates the choice of events from the size and direction of their shifts. The DVS comparison includes free gradient retiming, the official Yu PIL-$L_0$ implementation~\citep{yu2026retiming}, and SDA/PDSG~\citep{lun2025pdsg} and Yao/Gumbel attacks. Yu, Free, and Null share a common nominal retiming operating point. The native feasible sets and objectives of Yu, SDA, Yao, and Free differ from those of the exact-null attacks. We use these methods to study how fixed existing attacks behave under observer-relative evaluation, not as a same-feasible-set optimizer comparison. The direct optimizer comparison is Null versus C-PGD under the same attack population, legal moves, budget path, and matched gradient-evaluation counts. Appendix~\ref{app:fairness} gives the comparison details.

\subsection{Three-level evaluation}
Raw ASR asks only whether the temporal victim changes from correct to incorrect. For protected observer $A$, we first measure representation visibility
\begin{equation}
D_A(X,X')=\frac{\|A(X')-A(X)\|_1}{\|A(X)\|_1},\qquad D_\infty=\|A(X')-A(X)\|_\infty.
\label{eq:repdist}
\end{equation}
We then define stealth-constrained attack success
\begin{equation}
\mathrm{SC\text{-}ASR}_A(\tau)=\Pr[\text{success}\wedge D_A\le\tau\mid\text{clean-correct}],
\label{eq:scasr}
\end{equation}
so $\tau=0$ requires exact equality to the declared observer. For a fixed generated attack set, varying $\tau$ measures how much attack success remains as the observer tolerance is relaxed. The resulting curve is a post-hoc observer-tolerance curve. It is an attack-optimal Pareto frontier only if the attack is re-optimized separately at each $\tau$. Where timestamp displacement is directly comparable, we also report mean absolute displacement $d_t$ and $A(\epsilon)=\Pr[\text{success}\wedge d_t\le\epsilon\mid\text{clean-correct}]$. The two axes answer different questions. The $\tau$ axis limits what the declared observer sees. The $\epsilon$ axis limits how far events move in physical time. We use $A(\epsilon)$ as a complementary temporal-locality diagnostic, not as a substitute for representation stealth.

A practical defender introduces a third quantity. At clean false-positive rate $\alpha$, we define the undetected attack success rate (UASR)
\begin{equation}
\mathrm{UASR}_\alpha=\Pr[\text{success}\wedge\text{defender does not fire}\mid\text{clean-correct}].
\label{eq:uasr}
\end{equation}
The detector is calibrated on held-out clean clips and frozen before attack scoring. Raw ASR measures victim failure. $\mathrm{SC\text{-}ASR}_A$ measures victim failure that also satisfies a declared observer constraint. UASR measures successful attacks that evade a particular detector. We evaluate these quantities separately because a change in representation does not determine whether a particular detector can exploit that change.

\paragraph{Changing the observer.}
Exact stealth is relative to the declared observer. We apply the same generated attacks to shifted, finer-width, overlapping, and randomized temporal observers and measure the resulting representation deviation separately from detector performance. We also impose equality to intersections of fixed observers and measure how the common blind space and attack success change as observer constraints are added. These experiments directly test the observer-relative prediction that information hidden from one temporal view can reappear under another. They measure recovered information and blind-space reduction without assuming that every detector can exploit the recovered signal. Randomized-observer detection is evaluated separately with schedule-aware and schedule-blind rules. Full observer definitions and defense diagnostics are given in Appendix~\ref{app:observer-family}.

\subsection{Evaluation protocol}
Main Null results use three training seeds and several victim families. The unified five-attack studies evaluate Yu PIL-$L_0$, SDA, Yao/Gumbel, Free, and Null over three seeds using common attack populations within each dataset. ASR and SC-ASR are conditioned on clean-correct inputs. The Null--C-PGD comparison uses the same attack population, legal moves, progressive budget path, and matched victim-gradient evaluations at the primary comparison point. Across seeds we report mean$\pm$std descriptively. For repeated-clip comparisons, uncertainty uses a clip-clustered, seed-aware bootstrap that preserves attack pairing and each seed's clean-correct conditioning. Wilson intervals are retained only in per-population tables and are not treated as independent and identically distributed (IID) across repeated seed evaluations. Detector thresholds are calibrated on held-out clean clips and fixed before attack scoring. Appendix~\ref{app:expdetails} gives preprocessing, hyperparameters, clean-pool construction, and statistical details.

\section{Experiments}
\label{sec:experiments}
We first test whether the exact blind space contains retimings that cause victim errors across datasets and victim families. We then repeat the search with constrained projected gradient descent (C-PGD), evaluate existing attacks with raw ASR and $\mathrm{SC\text{-}ASR}_A(\tau)$, and separate representation visibility from detector-specific UASR. Finally, we change the temporal observer and test whether the same perturbations remain hidden. Unless noted, ASR is conditioned on clean-correct samples and headline values are mean$\pm$std over three seeds. Appendix~\ref{app:expdetails} gives preprocessing, model definitions, budgets, per-seed tables, and confidence intervals.

\subsection{Broad exact-null vulnerability}
\label{sec:breadth}
The protected operator is coarse temporal accumulation, and Null moves events only within their original protected windows. Every optimized row therefore satisfies $A(X')=A(X)$ exactly in integer arithmetic. Table~\ref{tab:broad} shows the common 10\% operating point for four aligned event-vision datasets and four victim families. The ConvSNN is our compact convolutional SNN; SEW-ResNet follows spike-element-wise residual learning~\citep{fang2021sew}; the temporal GRU follows gated recurrent units~\citep{cho2014gru}; and our Event Transformer uses a standard Transformer encoder~\citep{vaswani2017attention}. Null exceeds the corresponding random control in every dataset--victim pair, although sensitivity inside the blind space varies substantially by architecture.

\begin{table}[!htbp]
\caption{\textbf{Exact-null attack across four aligned event-vision datasets.} Optimized ASR / random-control ASR (\%). N-MNIST is reported separately below because its original and aligned protocols use different victim sets. DVS ConvSNN/SEW use displacement-conditioned random controls. DVS Transformer/GRU and the later dataset protocols use exact signed-displacement matching where reported. All optimized rows satisfy \(D_A=D_\infty=0\).}
\label{tab:broad}
\centering\tiny
\renewcommand{\arraystretch}{0.92}
\setlength{\tabcolsep}{2.8pt}
\begin{tabular}{lrrrr}
\toprule
Dataset & ConvSNN & SEW & Transformer & GRU \\
\midrule
DVS Gesture & 81.6 / 0.6 & 34.1 / 0.8 & 51.0 / 1.1 & 98.7 / 2.2 \\
CIFAR10-DVS & 93.4 / 8.6 & 65.0 / 2.1 & 86.0 / 5.1 & 99.0 / 22.0 \\
N-Caltech101 & 44.5 / 1.7 & 36.2 / 1.3 & 70.7 / 2.3 & 89.5 / 8.8 \\
DailyDVS-200 & 84.3 / 9.8 & 34.2 / 9.3 & 98.3 / 13.6 & 98.9 / 16.2 \\
\bottomrule
\end{tabular}
\end{table}

On DVS Gesture, the protected CoarseFrameFormer reaches $93.81\pm0.22\%$ clean accuracy and its logits remain bitwise identical on exact-null rows (Appendix~\ref{app:strongtransformer}). N-MNIST provides an additional aligned protocol check, DailyDVS-200 adds protocol-scale ACTION-Net~\citep{wang2021action}, MVFNet~\citep{wu2021mvfnet}, Swin-T~\citep{liu2021swin}, and TimeSformer~\citep{bertasius2021timesformer} variants, and SHD tests the same discrete mechanism outside vision. These supporting results are reported in Appendices~\ref{app:nmnist-aligned}--\ref{app:shd}.

\subsection{Null versus C-PGD in the same feasible set}
\label{sec:exactnullfair}
Figure~\ref{fig:cpgd-main} compares Null with C-PGD inside the same exact-null feasible set. Null and C-PGD use the same ConvSNN victim, attack population, timestamp-only within-window moves, and requested 10\% moved-event budget. At the primary comparison point, C-PGD uses one projected-gradient step per progressive budget stage, matching Null's victim-gradient evaluations through 10\%.

\begin{figure}[H]
\centering
\includegraphics[width=.46\linewidth]{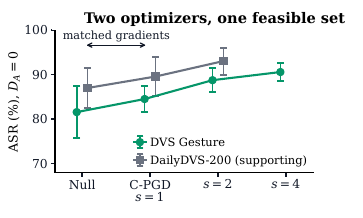}
\caption{\textbf{Two optimizers, one exact-null feasible set.} ASR at $D_A=0$ (mean$\pm$std, three seeds). C-PGD $s=1$ matches Null's victim-gradient evaluations; $s=2,4$ test optimization sensitivity. DailyDVS-200 is supporting because its ConvSNN has low clean accuracy.}
\label{fig:cpgd-main}
\end{figure}

At matched victim-gradient evaluations, C-PGD reaches $84.50\pm2.89\%$ versus $81.56\pm5.81\%$ for Null on DVS, a paired difference of $+2.94$ points with 95\% CI $[0.85,5.12]$. DailyDVS gives $89.55\pm4.39\%$ versus $86.98\pm4.44\%$, with paired CI $[-0.29,5.56]$. Additional C-PGD steps increase ASR while preserving $D_A=0$. The DVS comparison is primary; the DailyDVS ConvSNN result is supporting because its clean accuracy is $13.55\pm0.60\%$. Appendix~\ref{app:fairness} gives the full comparison, including the progressive-versus-single-pass DailyDVS reconciliation.

\subsection{Existing attacks under observer-relative evaluation}
\label{sec:directcomparison}
The five-attack study keeps each method's native objective and scores its generated attacks with the same observer-relative metric. We compare Yu PIL-$L_0$~\citep{yu2026retiming}, Sparse Dynamic Attack (SDA) with potential-dependent surrogate gradients~\citep{lun2025pdsg}, Yao/Gumbel~\citep{yao2024raw}, unconstrained Free retiming, and Null. Because their feasible sets differ, $\tau$ is applied \emph{post hoc}. Figure~\ref{fig:tau-two} is therefore a descriptive $\mathrm{SC\text{-}ASR}(\tau)$ comparison, not an attack-optimal Pareto frontier.

\begin{figure}[H]
\centering
\includegraphics[width=0.80\linewidth]{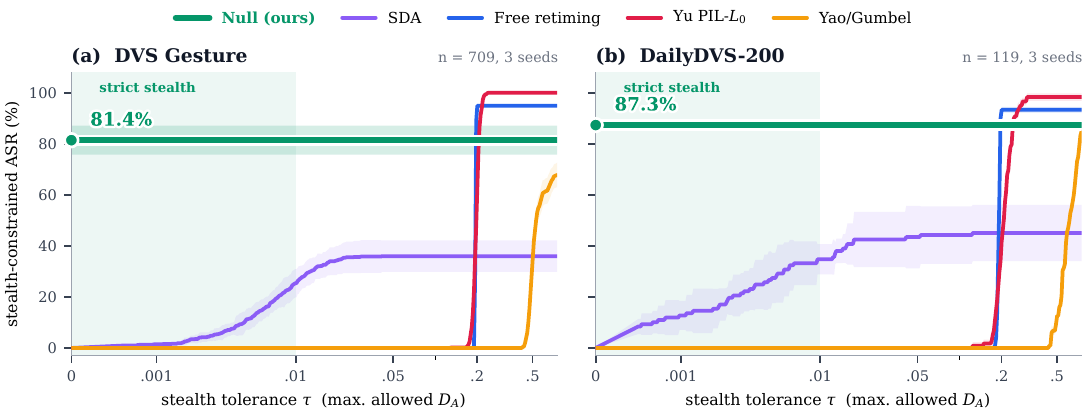}
\caption{\textbf{Existing attacks under a post-hoc observer tolerance.} DVS Gesture (left) and DailyDVS-200 (right). $\tau$ filters fixed generated attacks by protected-representation deviation. Null is generated inside the exact-null feasible set and is the only method here with nonzero success at $\tau=0$.}
\label{fig:tau-two}
\end{figure}

At $\tau=0$, Null retains 81.4\% attack success on DVS and 87.35\% on DailyDVS, while the other four generated attack sets have zero success. Raw ASR is higher for several alternatives, showing that victim failure and observer-constrained success are different quantities. Temporal locality is also separate: in the unified seed-0 DVS run, $A(200\,\mathrm{ms})$ is 51.88\% for Null, 7.11\% for Yu, and 0\% for Free. Full per-seed, footprint, temporal-radius, and statistical results are in Appendices~\ref{app:direct-eps}, \ref{app:fairness}, and~\ref{app:operational-partial}.

\subsection{Representation visibility versus operational detection}
\label{sec:operational}
Representation visibility does not determine detector evasion. Figure~\ref{fig:operational-main} therefore reports undetected attack success rate (UASR) separately from raw ASR and exact $\mathrm{SC\text{-}ASR}_A(0)$. The same clean-only detector protocol is applied to all five DVS attacks.

\begin{figure}[H]
\centering
\includegraphics[width=.53\linewidth]{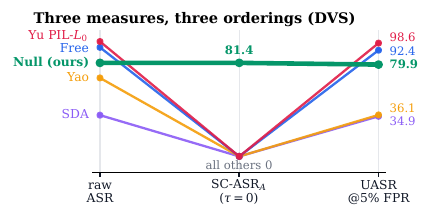}
\caption{\textbf{Three measures, three orderings on DVS Gesture.} Raw ASR measures victim failure, $\mathrm{SC\text{-}ASR}_A(0)$ requires exact observer equality, and UASR is successful attack rate after a detector at 5\% nominal false-positive rate. The same attack can rank very differently under the three questions.}
\label{fig:operational-main}
\end{figure}

Yu and Free retain high UASR despite changing the protected representation, whereas Yao is much easier to detect. Null satisfies $D_A=0$ exactly while retaining 79.9\% UASR. Yao's detector AUROC is 0.929; the remaining attacks are near chance in this DVS experiment. Complete AUROC, calibration, and DailyDVS diagnostic results are in Appendix~\ref{app:operational-partial}.

\subsection{Cross-observer visibility}
\label{sec:defense}
We next keep the generated Null attacks fixed and change only the temporal observer. Exact equality to the canonical observer need not survive a different temporal view. Table~\ref{tab:xobs} shows zero canonical deviation but nonzero deviation under shifted, half-width, and randomized observers on both DVS and DailyDVS. Overlapping and multiscale diagnostics in Appendix~\ref{app:observer-family} show the same qualitative effect.

\begin{table}[t]
\caption{\textbf{Changing the observer changes visibility and the common blind space.} The top reports mean relative deviation of the same Null attacks under alternative observers over three seeds. The bottom reports DVS attacks re-optimized while enforcing equality to additional fixed phase observers.}
\label{tab:xobs}
\centering\tiny
\renewcommand{\arraystretch}{0.92}
\setlength{\tabcolsep}{3.2pt}
\begin{tabular}{lrrrrrr}
\toprule
\multicolumn{7}{l}{\textit{Same attack, different observer. Relative representation deviation}}\\
Dataset & Canonical & Shift-1 & Shift-2 & Shift-4 & Half-width & Random mean\\
\midrule
DVS Gesture & \textbf{0.000} & 0.081 & 0.116 & 0.137 & 0.144 & 0.103\\
DailyDVS-200 & \textbf{0.000} & 0.078 & 0.111 & 0.131 & 0.143 & 0.099\\
\midrule
\multicolumn{7}{l}{\textit{DVS observer intersections. Blind-space size and attack success}}\\
Observer constraints & \multicolumn{2}{c}{Phase 0} & \multicolumn{2}{c}{Phases 0+2} & \multicolumn{2}{c}{Phases 0+2+4}\\
Blind fraction (\%) & \multicolumn{2}{c}{87.5} & \multicolumn{2}{c}{75.0} & \multicolumn{2}{c}{62.5}\\
ConvSNN ASR (\%) & \multicolumn{2}{c}{74.9} & \multicolumn{2}{c}{61.9} & \multicolumn{2}{c}{37.2}\\
SEW ASR (\%) & \multicolumn{2}{c}{38.0} & \multicolumn{2}{c}{29.5} & \multicolumn{2}{c}{20.7}\\
\bottomrule
\end{tabular}
\end{table}

The top panel supports the relational claim that canonical invisibility does not transfer automatically to another observer. The bottom panel tests the corresponding geometric prediction. Adding phase constraints shrinks the real-valued blind space and reduces ConvSNN and SEW attack success. GRU remains near saturation in this experiment (Appendix~\ref{app:observer-family}), so the intersection experiment does not establish a universal defense. Randomized schedules expose the fixed canonical-null DVS attacks in about 97.5\% of realized schedules. This is a representation-exposure result, not a complete defense benchmark. In the controlled randomized-accumulation study, a schedule-aware detector reaches AUROC about \(0.97/0.999\), while a schedule-blind statistic remains near chance (Appendix~\ref{app:controlled}). The temporal observation rule therefore changes both the available evidence and the feasible blind space.

\subsection{Controls and ablations}
\label{sec:ablations}
Targeted controls test simpler explanations. With a one-fine-bin cap, DVS ConvSNN/SEW/Transformer/GRU ASR remains 34.7/17.3/58.0/98.7\%. A matched-effort CIFAR10-DVS comparison shows that only Null preserves the protected representation, and all 12 optimized-versus-control comparisons remain significant after Holm correction. In the controlled study, null-subspace projected gradient descent at $\epsilon=.25$ lowers LIF accuracy to $0.33\%$ while the protected frame CNN remains at $97.80\%$, with no protected-label flips in $21{,}000$ attacks. Further ablations are in Appendices~\ref{app:shift}, \ref{app:observer-family}, and~\ref{app:cost}.

\section{Discussion and conclusion}
\label{sec:limitations}
\par\vspace{-0.35ex}
\noindent\textbf{Damaging directions inside an exact blind space.} Coarse temporal accumulation discards within-window timing, yet changes along these directions can affect models that retain finer timing. Null leaves the protected integer tensor exactly unchanged while causing victim errors. On DVS Gesture at 10\%, exact-null ASR reaches 81.56\% for ConvSNN and 98.67\% for GRU. C-PGD independently finds retimings with similar attack success inside the same exact-null feasible set. The vulnerability follows the blind space, not Null's particular search procedure. For the DVS temporal GRU and Event Transformer, matched controls preserve the optimized signed-shift multiset while assigning it to different legal event identities. All 6,336 sample-level matches complete exactly with zero fallback, and the controls remain far below the optimized attacks.
\par\vspace{-0.35ex}
\noindent\textbf{Attack success, observer stealth, and detection are distinct.} Yu and Free achieve high raw ASR and UASR while changing the protected representation; Null retains attack success at exact equality, while Yao has larger representation deviation and is easier to detect on DVS. Raw ASR, $\mathrm{SC\text{-}ASR}_A(\tau)$, and UASR measure victim failure, observer-constrained success, and detector evasion, respectively. None should be used as a proxy for the others. Protected CoarseFrameFormer logits remain bitwise identical on the DVS exact-null evaluations, while the operational detector can use information outside that representation.
\par\vspace{-0.35ex}
\noindent\textbf{Changing the observer changes what remains hidden.} Canonical-null attacks become visible under shifted, finer, overlapping, and randomized temporal views. Adding fixed observer constraints reduces the real-valued blind fraction from 87.5\% to 75.0\% to 62.5\%, while DVS ConvSNN ASR falls from 74.9\% to 61.9\% to 37.2\%. In the controlled randomized-schedule study, a schedule-aware detector reaches AUROC about 0.97/0.999 while a schedule-blind statistic remains near chance. Visibility therefore depends on the temporal information available to the observer, although these experiments do not establish observer diversification as a complete defense. GRU remains near saturation in the fixed-observer intersection experiment, so this is not a universal defense result.
\par\vspace{-0.35ex}
\noindent\textbf{Temporal locality is a separate constraint.} The tolerance $\tau$ bounds protected-representation change, whereas $A(\epsilon)$ measures success within a timestamp-displacement radius. For the three retiming methods in the seed-0 DVS analysis, $A(200\,\mathrm{ms})$ is 51.88\% for Null, 7.11\% for Yu, and 0\% for Free. Under a stricter one-fine-bin DVS cap, ASR remains 34.7/17.3/58.0/98.7\% for ConvSNN/SEW/Transformer/GRU. Exact observer equality and temporal displacement therefore describe different properties of the perturbation. We therefore report representation deviation, event footprint, and timestamp displacement separately.
\par\vspace{-0.35ex}
\noindent\textbf{Statistical support and scope.} Across three training seeds where applicable, the paired C-PGD$-$Null difference is $+2.94$ points on DVS (95\% CI $[0.85,5.12]$) and $+2.57$ on DailyDVS ($[-0.29,5.56]$). SHD preserves exact protected equality with substantial ASR, extending the discrete mechanism beyond event vision; vulnerability remains architecture dependent. The breadth study retains low-sensitivity victims rather than selecting only architectures on which the attack is strong. Our threat model is digital post-sensor timestamp retiming, not an over-the-air physical attack. The appendix reports the matching, equality, population, and calibration checks.
\par\vspace{-0.35ex}
\noindent\textbf{Conclusion.} Stealth depends jointly on the perturbation and the observer. A retimed stream can be unchanged to one temporal representation, cause errors in a finer-time model, and become visible under another view. $\mathrm{SC\text{-}ASR}_A(\tau)$ measures observer-constrained success, $A(\epsilon)$ measures temporal locality, and UASR measures detector evasion. Observer intersections show how additional temporal information changes the common blind space.

%\clearpage

\subsection*{AI use statement}
We used generative AI tools for research brainstorming and conceptual framing, discussion of mathematical arguments and experimental design, code drafting and debugging, analysis scripts, result interpretation, and language editing. The authors made the final methodological decisions, ran all reported experiments, checked numerical claims against the saved result files, reviewed AI-assisted mathematical arguments, code, and prose, and take responsibility for the final manuscript, claims, citations, and artifacts.

\subsection*{Ethics statement}
This work studies adversarial manipulation of event-camera streams and has dual-use implications. The empirical study is limited to digital timestamp retiming of recorded events. It does not demonstrate an over-the-air physical attack. N-MNIST, DVS128 Gesture, CIFAR10-DVS, N-Caltech101, and DailyDVS-200 are public benchmark datasets, and we collected no new human-subject data. DVS128 Gesture contains recordings of participants performing gestures. We use only the released benchmark events and labels for recognition experiments.

\subsection*{Reproducibility statement}
Appendix~\ref{app:proofs} contains full proofs and assumptions. Appendix~\ref{app:expdetails} documents preprocessing, model definitions, budgets, controls, and ablations. Appendix~\ref{app:controlled} gives the controlled spectral study. The experiment code and the saved per-seed result tables used for the reported values are available at \url{https://github.com/shoaibdipu/Stealth_Is_a_Relation_Not_a_Property}.

% Additional directly relevant architecture/representation references retained in the bibliography for context.
\nocite{amir2017gesture,fang2021sew,cho2014gru,vaswani2017attention,lagorce2017hots,sironi2018hats}
\bibliography{refs}
\bibliographystyle{paperstyle}

\appendix
\newpage
\section{Proofs and theoretical details}
\label{app:proofs}

\subsection{Continuous and stochastic statements}
\label{app:continuous-theory}
This section gives the continuous-time and stochastic results summarized in Section~\ref{sec:theory}.
\begin{theorem}[Kernel-zero criterion]\label{thm:kernelzero}
For the linear response $y_k=\int g(t_k-t)\,dN(t)$ and a nonzero additive periodic rate perturbation $\delta\lambda(t)=\alpha\cos(2\pi ft+\varphi)$ with $\alpha\neq0$, $\Delta\mathbb E[y_k]=0$ for every $k$ and phase $\varphi$ iff $\widehat g(f)=0$.
\end{theorem}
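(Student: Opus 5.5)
The plan is to compute $\Delta\mathbb E[y_k]$ in closed form by Campbell's theorem and read off the condition. For a point process with intensity $\lambda(t)$, we have $\mathbb E\int g(t_k-t)\,dN(t)=\int g(t_k-t)\lambda(t)\,dt$. So by linearity the perturbation changes the mean by
\[
\Delta\mathbb E[y_k]=\int g(t_k-t)\,\alpha\cos(2\pi f t+\varphi)\,dt .
\]
I will assume, as in the setting of Section~\ref{sec:theory}, that $g\in L^1(\mathbb R)$. This makes the integral absolutely convergent and $\widehat g(f)=\int g(u)e^{-2\pi i f u}\,du$ well defined. We also need $\lambda+\delta\lambda\ge 0$ for the perturbed process to exist; this affects neither direction of the equivalence.

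Next I substitute $u=t_k-t$ and write the cosine as the real part of a complex exponential:
\[
\Delta\mathbb E[y_k]=\alpha\,\mathrm{Re}\!\left[e^{i(2\pi f t_k+\varphi)}\int g(u)e^{-2\pi i f u}\,du\right]=\alpha\,|\widehat g(f)|\cos\!\bigl(2\pi f t_k+\varphi+\arg\widehat g(f)\bigr).
\]
Taking the real part outside the integral is legitimate because $g$ is real-valued; I will state this as an assumption on the kernel. The ``if'' direction is then immediate: $\widehat g(f)=0$ gives $\Delta\mathbb E[y_k]=0$ for every $k$ and every $\varphi$.

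For the ``only if'' direction, suppose $\Delta\mathbb E[y_k]=0$ for all $\varphi$, fixing any single sample time $t_k$. Since $\alpha\neq0$, the displayed formula gives $|\widehat g(f)|\cos(\theta+\varphi)=0$ for all $\varphi$, where $\theta=2\pi f t_k+\arg\widehat g(f)$. Choosing $\varphi=-\theta$ yields $|\widehat g(f)|=0$. Equivalently, evaluating at $\varphi=0$ and $\varphi=-\pi/2$ recovers the real and imaginary parts of $e^{2\pi i f t_k}\widehat g(f)$, and both must vanish. The quantifier over all phases is what makes the condition exactly $\widehat g(f)=0$, not merely a coincidental sampling zero.

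The only real obstacle is justifying the Campbell/Fubini step, that is, exchanging expectation with the stochastic integral. I would handle this by requiring $g\in L^1$ and a bounded base intensity. Then $\mathbb E\int|g(t_k-t)|\,dN(t)=\int|g|\lambda<\infty$. The formula follows by approximating $g$ with simple functions, for which $\mathbb E N(B)=\int_B\lambda$ holds by definition, and passing to the limit with dominated convergence.
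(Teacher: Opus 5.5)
Your proposal is correct and follows the paper's proof essentially step for step: Campbell's formula, the substitution $u=t_k-t$, the amplitude--phase form $\alpha|\widehat g(f)|\cos(2\pi f t_k+\varphi+\arg\widehat g(f))$, and, for the converse, a phase choice that makes the cosine equal to one at a single sample time. Your additions are regularity details the paper leaves implicit: that $g$ is real-valued and integrable with a bounded nonnegative intensity, the justification of the Campbell exchange, and the $\varphi\in\{0,-\pi/2\}$ variant, which avoids using $\arg\widehat g(f)$ before knowing it is defined.
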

\begin{corollary}[Boxcar nulls]\label{cor:harmonics}
For a boxcar of width $\Delta$, nonzero zeros occur at $f=m/\Delta$. An idealized leaky integrate-and-fire (LIF) kernel has nonzero gain at every finite $f$.
\end{corollary}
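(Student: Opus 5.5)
The statement to prove is Corollary~\ref{cor:harmonics}. Both parts reduce to computing a Fourier transform and applying Theorem~\ref{thm:kernelzero}, which says that a periodic rate perturbation at frequency $f$ is invisible to the observer exactly when $\widehat g(f)=0$. I use the convention $\widehat g(f)=\int g(t)e^{-2\pi i f t}\,dt$. Nothing in the argument depends on centering the kernel, since a time shift only multiplies $\widehat g$ by a unimodular phase and does not move its zeros.

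\textbf{Boxcar.} Take $g=\mathbf 1_{[0,\Delta)}$; any normalization constant is irrelevant. Direct integration gives
\[
\widehat g(f)=\frac{1-e^{-2\pi i f\Delta}}{2\pi i f}=e^{-\pi i f\Delta}\,\frac{\sin(\pi f\Delta)}{\pi f}
\quad\text{for } f\neq0,
\]
and $\widehat g(0)=\Delta\neq0$. For $f\neq0$ the prefactor never vanishes, so $\widehat g(f)=0$ iff $\sin(\pi f\Delta)=0$. That holds iff $f\Delta\in\mathbb Z\setminus\{0\}$, that is, $f=m/\Delta$ with $m$ a nonzero integer. Theorem~\ref{thm:kernelzero} then shows these are exactly the frequencies whose periodic perturbations are hidden for every phase.

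\textbf{LIF kernel.} Model the idealized LIF kernel as $g(t)=e^{-t/\tau_m}\mathbf 1_{t\ge0}$ with membrane time constant $\tau_m>0$, i.e.\ subthreshold linear dynamics with no reset. Then
\[
\widehat g(f)=\frac{1}{1/\tau_m+2\pi i f},\qquad |\widehat g(f)|=\frac{\tau_m}{\sqrt{1+(2\pi f\tau_m)^2}}>0 .
\]
This is strictly positive for every finite $f$, so by Theorem~\ref{thm:kernelzero} no nonzero periodic perturbation is hidden for all phases.

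\textbf{Main obstacle.} The difficulty is scope, not computation: one must state clearly what "idealized" means. The nonvanishing claim holds for the linear leaky filter. Spike thresholding and reset are nonlinear, so they fall outside the linear-response hypothesis of Theorem~\ref{thm:kernelzero}, and I would state that restriction explicitly.
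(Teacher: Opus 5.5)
Your proposal is correct and follows essentially the same route as the paper's proof. Both compute the boxcar transform $e^{-j\pi f\Delta}\sin(\pi f\Delta)/(\pi f)=\Delta\,\mathrm{sinc}(f\Delta)e^{-j\pi f\Delta}$ to get zeros exactly at $f\Delta\in\mathbb Z\setminus\{0\}$, and the one-pole transform of the exponential kernel to show its modulus is strictly positive. The only differences are cosmetic: you omit the paper's $1/\tau$ normalization of the LIF kernel and add an explicit scope remark about threshold and reset, and neither changes the zero sets.
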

\begin{theorem}[Poisson count indistinguishability]\label{thm:poisson}
For an inhomogeneous Poisson process, preserving integrated rate in every disjoint accumulation window preserves the joint distribution of the complete frame-count sequence.
\end{theorem}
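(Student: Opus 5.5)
The plan is to use the standard structural facts about the Poisson process: independence of increments over disjoint sets, and the Poisson marginal law of each count. Let the frame counts be $N_k=N(W_k)$, where $W_1,\dots,W_K$ are the disjoint accumulation windows. Let $\lambda$ be the clean rate and $\lambda'=\lambda+\delta\lambda$ the perturbed rate. We assume both are nonnegative and locally integrable, so that both processes are well-defined inhomogeneous Poisson processes. The hypothesis is that $\Lambda_k:=\int_{W_k}\lambda(t)\,dt=\int_{W_k}\lambda'(t)\,dt=:\Lambda'_k$ for every $k$, each integral being finite.

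The first step is to identify each marginal. For an inhomogeneous Poisson process, the count on a bounded Borel set $B$ is Poisson with mean $\int_B\lambda$. So $N_k\sim\mathrm{Poisson}(\Lambda_k)$ under $\lambda$, and $N_k\sim\mathrm{Poisson}(\Lambda'_k)$ under $\lambda'$. These laws coincide by hypothesis.

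The second step uses independence. Because the windows are pairwise disjoint, the counts $N_1,\dots,N_K$ are mutually independent under either rate, by the complete-independence property of Poisson random measures. The joint law therefore factorizes:
\[
\Pr[N_1=n_1,\dots,N_K=n_K]=\prod_{k=1}^K e^{-\Lambda_k}\frac{\Lambda_k^{n_k}}{n_k!}.
\]
This product depends on the rate only through $(\Lambda_1,\dots,\Lambda_K)$. Equal integrated rates therefore give identical joint distributions of the full frame-count sequence. If the windows are countably many, for instance tiling the whole time axis, the same argument applies to every finite subfamily. The finite-dimensional distributions then agree, and Kolmogorov's extension theorem (uniqueness of the law on the product space) gives equality of the law of the whole sequence.

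I expect the only delicate point to be stating the assumptions precisely rather than any computation. The perturbed rate must remain nonnegative, so that it defines a genuine Poisson process. Each window must be bounded with finite integrated rate. The windows must be genuinely disjoint: overlapping windows would break the factorization, and the claim can then fail. I would also remark that the result concerns only the law of the counts, not the realized event times: the within-window event positions, which are i.i.d.\ with density proportional to $\lambda$ given $N_k$, can differ. This is exactly the stochastic analogue of Proposition~\ref{prop:discrete-null}.
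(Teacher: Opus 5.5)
Your proof is correct and follows the paper's argument essentially step for step. Both use Poisson window marginals with means $\int_{W_k}\lambda$ and independence of counts over disjoint windows, so the product law depends on the rate only through the window integrals. Your explicit hypotheses (nonnegative perturbed rate, finite window integrals) and your extension to countably many windows via the finite-dimensional distributions are harmless refinements that the paper leaves implicit.
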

\begin{theorem}[Randomized accumulation]\label{thm:randomized}
For positive random boxcar width $W$ with characteristic function $\phi_W$,
\begin{equation}\mathbb E_W|\widehat g_W(f)|^2=\frac{1-\mathrm{Re}\,\phi_W(2\pi f)}{2\pi^2f^2}.\label{eq:random-visibility}\end{equation}
If $W$ has a density, this energy is positive for every $f\neq0$.
\end{theorem}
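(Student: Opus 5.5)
We prove Theorem~\ref{thm:randomized} by computing the Fourier transform of a boxcar of fixed width, averaging its squared modulus over $W$, and reading positivity off the characteristic function. We fix the normalization $g_w(t)=\1[0\le t\le w]$ (unnormalized boxcar, as in Corollary~\ref{cor:harmonics}) and the convention $\widehat g(f)=\int g(t)e^{-2\pi i f t}\,dt$. Then for $f\neq0$
\[
\widehat g_w(f)=\frac{1-e^{-2\pi i f w}}{2\pi i f},
\qquad
|\widehat g_w(f)|^2=\frac{|1-e^{-2\pi i f w}|^2}{4\pi^2f^2}=\frac{2-2\cos(2\pi f w)}{4\pi^2 f^2}=\frac{1-\cos(2\pi f w)}{2\pi^2f^2}.
\]
This is the deterministic identity; its zeros at $w=m/f$ recover the boxcar nulls of Corollary~\ref{cor:harmonics}.

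Next we take expectation over $W$. The integrand is bounded by $1/(\pi^2f^2)$, so the expectation exists and Fubini is not even needed. Linearity gives $\mathbb E_W|\widehat g_W(f)|^2=(1-\mathbb E\cos(2\pi fW))/(2\pi^2f^2)$. Since $\phi_W(u)=\mathbb E e^{iuW}$, we have $\mathrm{Re}\,\phi_W(2\pi f)=\mathbb E\cos(2\pi f W)$, which yields \eqref{eq:random-visibility}. If the paper instead normalizes the boxcar by $1/w$, we would note that the identity as stated matches the unnormalized kernel and adopt that convention explicitly.

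For positivity, the main step is to show $\mathrm{Re}\,\phi_W(2\pi f)<1$ for $f\neq0$ when $W$ has a density $p$. We have $1-\mathrm{Re}\,\phi_W(2\pi f)=\int(1-\cos(2\pi f w))p(w)\,dw$, and the integrand is nonnegative. Equality to zero would force $\cos(2\pi f w)=1$ for $p$-almost every $w$, i.e.\ $W\in\{m/|f|:m\in\mathbb Z\}$ almost surely. That set is countable and hence Lebesgue-null, so it has zero probability under an absolutely continuous law. This contradicts $\int p=1$. Hence the energy is strictly positive for every $f\neq0$.

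The only delicate point is this last measure-theoretic step, together with pinning down the Fourier convention so that the constant $2\pi^2$ matches. The positivity argument actually needs only that $W$ puts no mass on any lattice $\{m/|f|\}$, so non-atomic laws suffice; the density assumption is stronger than necessary.
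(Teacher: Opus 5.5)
Your proposal is correct and follows the paper's proof essentially step for step: compute $\widehat g_W(f)=(1-e^{-2\pi i fW})/(2\pi i f)$, reduce the squared modulus to $(1-\cos(2\pi fW))/(2\pi^2f^2)$, and take expectations using $\mathrm{Re}\,\phi_W(2\pi f)=\mathbb E\cos(2\pi fW)$. Positivity then holds because a vanishing expectation would force $W$ onto the countable lattice $\{m/f: m\in\mathbb Z\}$, which a density assigns zero mass; your remark that non-atomic laws already suffice is a valid minor sharpening, consistent with the paper's comment that a continuous width law is what is needed.
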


\subsection{Proof of Proposition~\ref{prop:detection-impossibility}}
\label{app:proof-detection}
Let a detector restricted to $R$ be written as $D(R(X),U)$, where $U$ denotes any detector randomness. If $R(X')=R(X)$, then for every fixed realization $u$,
\[
D(R(X'),u)=D(R(X),u).
\]
A deterministic detector is the special case without $U$ and returns the same output on the paired clean and attacked samples. If the clean and attacked evaluations use independent randomness with the same law, their output distributions are identical because their representation inputs are identical. The paired representation therefore contains no information that can increase true-positive rate above false-positive rate. This statement applies only to detectors that factor through $R$. A detector with access to the raw event stream or another representation is not covered.

\subsection{Baseline-rate scope}
\label{app:baseline-scope}
For a general multiplicative perturbation $\lambda(t)\mapsto\lambda(t)(1+a(t))$, the attack-induced change is
\[
\Delta\E[y_k]=\int g(t_k-t)\lambda(t)a(t)\,dt.
\]
The transform condition applies to the product $\lambda a$. Writing $\lambda(t)=\lambda_0$ over the support of $g$ factors out the baseline and gives the pure-tone form used in Theorem~\ref{thm:kernelzero}. If the baseline changes inside a window, a sinusoidal $a$ need not remain single-frequency after multiplication by $\lambda$. In the original controlled check, a linear-ramp baseline produced a residual about $7\times10^{-3}$ of the frame mean and a full saccade profile produced about $9\times10^{-2}$. The recorded-event experiments bypass this approximation by enforcing exact per-window sums.

\subsection{Proof of Theorem~\ref{thm:kernelzero}}
\label{app:proof-kernel}
With $\delta\lambda(t)=\alpha\cos(2\pi ft+\varphi)$, Campbell's formula gives
\[
\Delta\E[y_k]=\alpha\int g(t_k-t)\cos(2\pi ft+\varphi)\,dt.
\]
Substitute $u=t_k-t$:
\begin{align*}
\Delta\E[y_k]
&=\alpha\int g(u)\cos\!\left(2\pi f(t_k-u)+\varphi\right)\,du\\
&=\alpha\,\operatorname{Re}\!\left\{e^{j(2\pi ft_k+\varphi)}\int g(u)e^{-j2\pi fu}\,du\right\}\\
&=\alpha|\widehat g(f)|\cos\!\left(2\pi ft_k+\varphi+\angle\widehat g(f)\right).
\end{align*}
If $\widehat g(f)=0$, the response vanishes for every $k$ and $\varphi$. Conversely, assume $\widehat g(f)\neq0$. Set $\varphi=-2\pi ft_k-\angle\widehat g(f)$ for any chosen $k$. Then $\Delta\E[y_k]=\alpha|\widehat g(f)|\neq0$. Linearity extends the claim to additive perturbations whose spectral support lies in the zero set.

\subsection{Proof of Corollary~\ref{cor:harmonics}}
\label{app:proof-harmonics}
For $g=\mathbf 1_{[0,\Dt)}$,
\[
\widehat g(f)=\int_0^\Dt e^{-j2\pi fu}\,du
=\Dt\,\mathrm{sinc}(f\Dt)e^{-j\pi f\Dt},
\]
whose nonzero zeros satisfy $f\Dt\in\mathbb Z\setminus\{0\}$. For $h(t)=e^{-t/\tau}\mathbf 1_{t\ge0}/\tau$,
\[
\widehat h(f)=\frac{1}{1+j2\pi f\tau},\qquad
|\widehat h(f)|=\left(1+(2\pi f\tau)^2\right)^{-1/2}>0.
\]
At the first boxcar null $f=1/\Dt$, the LIF gain is $(1+(2\pi\tau/\Dt)^2)^{-1/2}$, which equals about $0.62$ when $\tau/\Dt=0.2$.

\subsection{Proof of Theorem~\ref{thm:poisson}}
\label{app:proof-poisson}
For an inhomogeneous Poisson process with intensity $\lambda$, the count in a Borel set $B$ is Poisson with mean $\int_B\lambda$, and counts in disjoint sets are independent. Let $y_k=N([k\Dt,(k+1)\Dt))$. If the perturbation has zero integral in every coarse window, then each attacked window has the same Poisson parameter as its clean counterpart. Independence holds in both cases, so the product law of the full count sequence is identical. For any test $\psi$ measurable with respect to $(y_k)_k$, $\E_{H_1}\psi=\E_{H_0}\psi$. Its power equals its size.

This conclusion is exact for Poisson window counts and for any statistic computed only from those counts. For a non-Poisson sensor process, equality of the window integrals guarantees equality of first moments but does not by itself guarantee equality of higher-order count statistics. Proposition~\ref{prop:discrete-null} supplies the stronger realized-representation statement used in the recorded-event experiments.

\subsection{Proof of Proposition~\ref{prop:intersection}}
\label{app:proof-intersection}
A perturbation $\delta$ is invisible to every monitored linear representation precisely when $A_i\delta=0$ for all $i$. Stacking these equations gives
\[
\begin{bmatrix}A_1\\ \vdots\\ A_M\end{bmatrix}\delta=0,
\]
which proves Eq.~(\plaineqref{eq:joint-null}). Appending rows to a linear operator cannot decrease its rank and cannot increase the dimension of its null space.

\subsection{Proof of Proposition~\ref{prop:discrete-null}}
\label{app:proof-discrete-null}
For one coarse window and fixed $(c,y,x)$, the aggregation row is $\mathbf 1_S^\top$. Moving one event from $s$ to $s'$ produces $\delta=e_{s'}-e_s$, hence
\[
\mathbf 1_S^\top\delta=1-1=0.
\]
All other coarse rows are unchanged, so $A\delta=0$. For a finite collection of legal moves, define the aggregate perturbation $\Delta=\sum_i\delta_i$. Linearity gives $A\Delta=0$, and therefore $A(X+\Delta)=AX$ exactly in integer arithmetic. Any downstream function $C$ of $AX$ receives the identical argument.

\subsection{Nullity calculation for Proposition~\ref{prop:discrete-null}}
\label{app:proof-nullity}
Each fixed tuple $(k,c,y,x)$ contributes one row that sums $S$ distinct fine-time coordinates. Rows for distinct tuples have disjoint support, so the $KCHW$ rows are linearly independent and $\operatorname{rank}(A)=KCHW$. The input dimension is $KSCHW$. Rank-nullity gives
\[
\dim\Null(A)=KSCHW-KCHW=KCHW(S-1).
\]
This is the dimension over the reals. Legal timestamp moves occupy an integer subset further restricted by event occupancy and nonnegativity.

\subsection{Derivation of the mismatch law, Eq.~(\plaineqref{eq:rho})}
\label{app:mismatch-law}
For fixed $m\in\mathbb Z\setminus\{0\}$ and small $\eps$, let $f=(m+\eps)/\Dt$. Then
\[
\mathrm{sinc}(m+\eps)
=\frac{\sin(\pi(m+\eps))}{\pi(m+\eps)}
=\frac{(-1)^m\sin(\pi\eps)}{\pi(m+\eps)}.
\]
Hence
\[
|\widehat g(f)|^2
=\Dt^2\frac{\sin^2(\pi\eps)}{\pi^2(m+\eps)^2}
=\Dt^2\frac{\eps^2}{m^2}\left(1+O(\eps)\right),
\]
using $\sin(\pi\eps)=\pi\eps(1+O(\eps^2))$ and $(m+\eps)^{-2}=m^{-2}(1+O(\eps))$. Dividing $\Dt^2|\widehat h(f)|^2$ by this quantity gives
\begin{equation}
\rho(f)=\frac{m^2}{\eps^2}\left(1+(2\pi f\tau)^2\right)^{-1}(1+O(\eps)).
\label{eq:rho}
\end{equation}
At $\eps=0$, the denominator is zero and $\rho$ diverges.

\subsection{Proof of Theorem~\ref{thm:randomized}}
\label{app:proof-randomized}
For a boxcar of random width $W$,
\[
\widehat g_W(f)=\int_0^W e^{-j2\pi fu}\,du
=\frac{1-e^{-j2\pi fW}}{j2\pi f}.
\]
Thus
\[
|\widehat g_W(f)|^2
=\frac{|1-e^{-j2\pi fW}|^2}{4\pi^2f^2}
=\frac{1-\cos(2\pi fW)}{2\pi^2f^2}.
\]
Taking the expectation and using $\E[\cos(2\pi fW)]=\operatorname{Re}\phi_W(2\pi f)$ gives Eq.~(\plaineqref{eq:random-visibility}). If this expectation is zero, then $\cos(2\pi fW)=1$ almost surely, so $W$ lies on the countable lattice $\{m/f:m\in\mathbb Z\}$ almost surely. A distribution with a density assigns zero mass to that set. The expectation is strictly positive for $f\neq0$.

For $W\sim\operatorname{Unif}[\Dt(1-r),\Dt(1+r)]$,
\[
\phi_W(\omega)=e^{j\omega\Dt}\,\mathrm{sinc}(\omega\Dt r/\pi),
\]
so at $f=m/\Dt$,
\[
\E|\widehat g_W(m/\Dt)|^2
=\frac{\Dt^2\bigl(1-\mathrm{sinc}(2mr)\bigr)}{2\pi^2m^2}.
\]
For $m=1$ this increases with $r$ on $[0,1/2]$. At $r=0.25$ it equals $1.84\times10^{-2}\Dt^2$. A continuous width law is needed. A non-degenerate law supported on $\{1/f,2/f\}$ still gives zero visibility at frequency $f$.

\subsection{Schedule-aware deflection: full derivation}
\label{app:schedule-deflection}
Let window $k$ have width $W_k$ and center $t_k^c$. Integrating the attacked rate over that window gives
\begin{equation}
\mu_k=\lambda_0W_k+\lambda_0\alpha\,G(W_k)\cos(2\pi f t_k^c+\varphi),
\qquad
G(W)=\frac{\sin(\pi fW)}{\pi f}.
\label{eq:app-mu}
\end{equation}
At $f=m/\Dt$, write $W/\Dt=1+u$. If the law of $W$ is symmetric about $\Dt$, then $u$ is symmetric and
\[
\sin(\pi m(1+u))=(-1)^m\sin(\pi m u),
\]
an odd function of $u$. Hence $\E[G(W)]=0$.

For a schedule-blind coherent statistic, use the nominal schedule rather than the realized width and set $\widetilde z_k=(y_k-\lambda_0\Dt)/\sqrt{\lambda_0\Dt}$. If the nominal phase is independent of $W_k$, the attack-induced mean factorizes and symmetry about $\Dt$ gives $\E[G(W_k)]=0$ at the old null. This exact cancellation need not hold for arbitrary randomized schedules or for a statistic normalized by $1/\sqrt{W_k}$.

Now use the realized gain. Set
\[
z_k=\frac{y_k-\lambda_0W_k}{\sqrt{\lambda_0W_k}},\qquad
w_k=\frac{G(W_k)e^{-j2\pi f t_k^c}}{\sqrt{W_k}},
\qquad
S=\sum_k z_kw_k.
\]
For conditionally independent Poisson counts in disjoint windows, under the null $\operatorname{Var}(S)=\sum_kG(W_k)^2/W_k$. From Eq.~(\plaineqref{eq:app-mu}), the attacked mean of $z_k$ is
\[
\E[z_k]=\sqrt{\lambda_0}\alpha\frac{G(W_k)}{\sqrt{W_k}}
\cos(2\pi f t_k^c+\varphi).
\]
Multiplying by $w_k$ and summing gives
\[
\E[S]
=\sqrt{\lambda_0}\alpha\sum_k\frac{G(W_k)^2}{W_k}
\cos(2\pi f t_k^c+\varphi)e^{-j2\pi f t_k^c}.
\]
Assume the widths are independent, or more generally mixing, and that the accumulated centers $t_k^c$ yield sufficient phase dispersion at $2f$. Then the normalized double-frequency sum vanishes asymptotically. For finite $K$ this is an approximation, leaving
\[
\E[S]\approx \frac{\sqrt{\lambda_0}\alpha}{2}e^{j\varphi}
\sum_k\frac{G(W_k)^2}{W_k}.
\]
The squared deflection is
\begin{equation}
d^2=\frac{|\E[S]|^2}{\operatorname{Var}(S)}
=\frac{\alpha^2\lambda_0}{4}\sum_k\frac{G(W_k)^2}{W_k}
\approx\frac{K\alpha^2\lambda_0\E|\widehat g_W(f)|^2}{4\Dt}.
\label{eq:deflection}
\end{equation}
For widths concentrated around $\Dt$, the approximation replaces $1/W_k$ by $1/\Dt$ and the empirical mean of $G(W_k)^2$ by its expectation. Theorem~\ref{thm:randomized} makes this expectation positive, so $d^2$ grows linearly with the number of observed windows.

\subsection{Additional representation and temporal-budget details}
\label{app:theory-additional}
Exact stealth corresponds to the zero-radius case of the representation-space ball $\mathcal B_\tau^A(X)$. The radius $\tau$ is a monitor tolerance rather than a second event-retiming budget. The same perturbed event stream can therefore have different stealth values for different representations $A$. For the block-sum operator, Fourier carriers occupy structured directions inside the larger aggregation null space. Legal timestamp moves occupy an integer subset further restricted by event occupancy and nonnegativity.

A $B$-bin voxel grid changes the temporal kernel. At the first null of the original frame, normalized visibility becomes $|\mathrm{sinc}(1/B)|$. It is $0$ at $B=1$, $0.64$ at $B=2$, and $0.98$ at $B=10$. Finer temporal summaries close one blind spot and create their own higher-frequency nulls. For the N-MNIST, DVS Gesture, CIFAR10-DVS, N-Caltech101, and DailyDVS-200 tensorizations used in Section~\ref{sec:experiments}, Proposition~\ref{prop:discrete-null} gives $312{,}120$, $1{,}146{,}880$, $573{,}440$, $573{,}440$, and $573{,}440$ real-valued blind directions, respectively.

The representation budget $\tau$ and temporal budget $\epsilon$ constrain different properties. The first limits what the protected observer sees. The second limits how far the attack moves events in time. A perturbation may lie in $\Null(A)$ and hence have $D_A=0$ while still differing substantially in $d_t$. Reporting both $\mathrm{SC\text{-}ASR}_A(\tau)$ and $A(\epsilon)$ separates observer-relative stealth from the amount of retiming required to achieve it.

\section{Supplementary comparisons and figures}
\label{app:suppcompare}
This section collects two supporting views that are useful for interpreting the main result but are not needed for the core experimental sequence: a direct comparison with prior event-security work and a visualization of the continuous transfer-function mechanism. The first clarifies the novelty boundary; the second shows why the same temporal modulation can be hidden from one representation and visible to another.

\FloatBarrier
\subsection{Event-security comparison}
\label{app:related}
\begin{table}[!htbp]
\caption{Comparison with event-security work. ``Protected observation'' denotes a guarantee on the realized representation available to a stated observer. Collapsed counts are one specific observation. ``Exact chosen $R$'' denotes the general invariance condition $R(X')=R(X)$, which reduces to count equality when $R$ is frame accumulation.}
\label{tab:related}
\centering
\scriptsize
\setlength{\tabcolsep}{2.5pt}
\begin{tabular}{p{1.75cm}p{2.45cm}c p{2.25cm} p{2.55cm}}
\toprule
Work & Perturbation & Count-preserving & Protected observation & Visibility / defense analysis \\
\midrule
DVS-Attacks~\citep{marchisio2021dvsattacks} & event-sequence perturbation & no constraint & -- & sensor-filter study \\
Lee--Myung~\citep{lee2022aead} & time shifts + added events & no & -- & -- \\
Yao et al.~\citep{yao2024raw} & direct raw-event attack & no constraint & -- & -- \\
Du et al.~\citep{du2026configurable} & raw-event, configurable-latency attack & no exact count constraint & -- & latency-robust optimization \\
Yu et al.~\citep{yu2026retiming} & timing-only retiming & yes & -- & -- \\
Temporal Poisoning~\citep{temporalpoison2026} & timestamp redistribution & yes & collapsed counts & detector study \\
\textbf{This work} & exact-null retiming & yes & \textbf{exact chosen $R$} & \textbf{observer-relative metric + cross-observer tests} \\
\bottomrule
\end{tabular}
\end{table}

Table~\ref{tab:related} separates three axes that are easy to conflate. Yu et al. study timing-only, count-preserving retiming, while Temporal Poisoning uses a count-preserving timestamp transformation in a training-time backdoor setting. Our test-time setting instead declares the protected representation $R$, measures success subject to $R(X')=R(X)$ or a tolerance around it, and then changes the observer to test whether the same perturbation remains hidden.

\FloatBarrier
\subsection{Transfer-function visualization}
\label{app:visibilityfig}
Figure~\ref{fig:visibility} visualizes the continuous counterpart of the discrete blind-space argument. At boxcar harmonics, the accumulated-frame response is zero while the idealized LIF response remains nonzero. Near a boxcar null, a small frequency mismatch restores frame visibility according to Eq.~(\plaineqref{eq:rho}). This visualization is explanatory; the recorded-event attacks use the exact integer guarantee of Proposition~\ref{prop:discrete-null}.
\begin{figure}[!htbp]
\centering
\includegraphics[width=.72\linewidth]{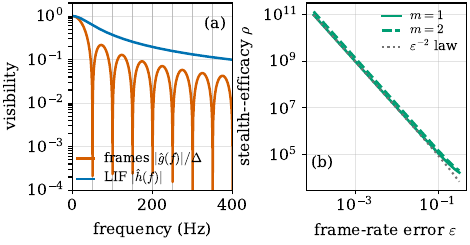}
\caption{Visibility of one temporal modulation to two consumers. Boxcar accumulation has zeros at frame-rate harmonics. The LIF membrane has nonzero gain. The right panel shows the $\eps^{-2}$ growth of the stealth--efficacy ratio near a boxcar null.}
\label{fig:visibility}
\end{figure}

\FloatBarrier
\section{Experimental details}
\label{app:expdetails}

\FloatBarrier
\subsection{N-MNIST protocol}
We parse the original binary N-MNIST files directly. Events are binned into $150$ steps of $2$ ms over a $300$ ms recording. Two polarities are retained separately on the $34\times34$ sensor grid. The coarse representation sums every $10$ fine bins, producing $15$ windows. We train three seeds for a study-specific frame CNN, a leaky integrate-and-fire (LIF) MLP~\citep{gerstner2014neuronal}, and a temporal GRU~\citep{cho2014gru}. The headline attack uses $1{,}000$ balanced test samples ($100$ per digit) and unique-event budgets $\{5,10,20,30\}\%$.

The LIF model uses an exponential decay corresponding to $\tau=4$ ms. The strict unique-event attack maintains an availability tensor containing only original events that have never been moved. Four gradient refresh rounds are used. The random control uses the same unique-event budget and samples a legal destination uniformly inside the original coarse window.

\begin{table}[!htbp]
\caption{N-MNIST ASR (\%) over three seeds.}
\label{tab:nmnist-full}
\centering\small
\begin{tabular}{llrrrr}
\toprule
Victim & Attack & 5\% & 10\% & 20\% & 30\%\\
\midrule
LIF & gradient retiming & 8.66$\pm$0.15 & 23.12$\pm$0.98 & 54.81$\pm$1.77 & 75.92$\pm$1.71\\
LIF & random retiming & 0.20$\pm$0.10 & 0.17$\pm$0.12 & 0.34$\pm$0.21 & 0.55$\pm$0.26\\
GRU & gradient retiming & 66.08$\pm$2.45 & 90.70$\pm$1.64 & 98.44$\pm$1.05 & 99.45$\pm$0.26\\
GRU & random retiming & 0.17$\pm$0.12 & 0.24$\pm$0.15 & 0.37$\pm$0.06 & 0.41$\pm$0.20\\
\bottomrule
\end{tabular}
\end{table}

The mean adversarial absolute displacement decreases mildly with budget because later moves use lower-gain destinations. For LIF it is $11.71$, $11.16$, $10.54$, and $10.10$ ms across the four budgets. For the GRU it is $10.54$, $9.97$, $9.50$, and $9.23$ ms. The random baseline has mean displacement about $7.33$ ms. This N-MNIST baseline matches the event budget but not the displacement distribution. We therefore do not use it for the event-identity isolation claim. On DVS Gesture, exact signed-displacement matching is used for the GRU and Transformer event-identity isolation claim. ConvSNN and SEW use displacement-conditioned controls. CIFAR10-DVS, N-Caltech101, and DailyDVS-200 use the exact-control protocol reported in their respective sections.

The raw-stream export routine materializes count-space retiming back into timestamp-edited event lists while preserving event count, $x$, $y$, and polarity. Three saved examples pass exact reconstruction checks against the attacked count tensor.

\FloatBarrier
\subsection{DVS Gesture protocol}
Each DVS128 Gesture segment is stored as an $N\times4$ event array $(x,y,p,t)$, and train and test subject IDs do not overlap. Sensor coordinates remain $128\times128$ in the source and are deterministically pooled by $2\times2$ blocks to $64\times64$ for model input. Each gesture duration $D_i$ is normalized to $160$ temporal bins, so the physical width of a fine bin is $D_i/160$. Eight fine bins form one coarse frame.

Three seeds are trained for a frame ResNet-18-style model~\citep{he2016resnet}, our compact ConvSNN, a SEW-ResNet-18-style SNN~\citep{fang2021sew}, a temporal GRU~\citep{cho2014gru}, our Event Transformer using a standard Transformer encoder~\citep{vaswani2017attention}, and our CoarseFrameFormer. Main attack budgets are $\{2,5,10,20\}\%$. The full test attack set contains $264$ samples. The SEW implementation is a compact PyTorch reimplementation of the element-wise residual principle, not the official reference code. The temporal GRU, Event Transformer, and CoarseFrameFormer use the same cached event tensors and subject-disjoint split.

\begin{table}[!htbp]
\caption{DVS Gesture ASR (\%) over three seeds. ``Disp.-cond. random'' samples attack-derived absolute shift magnitudes, then chooses random legal event identities/directions.}
\label{tab:dvs-full}
\centering\small
\begin{tabular}{llrrrr}
\toprule
Victim & Attack & 2\% & 5\% & 10\% & 20\%\\
\midrule
ConvSNN & gradient retiming & 24.98$\pm$3.43 & 56.45$\pm$5.26 & 81.56$\pm$5.81 & 90.98$\pm$1.27\\
ConvSNN & disp.-cond. random & 0.14$\pm$0.25 & 0.14$\pm$0.24 & 0.56$\pm$0.49 & 1.70$\pm$0.75\\
SEW-ResNet & gradient retiming & 10.44$\pm$3.75 & 23.22$\pm$3.74 & 34.06$\pm$3.69 & 48.17$\pm$1.40\\
SEW-ResNet & disp.-cond. random & 0.27$\pm$0.23 & 0.55$\pm$0.23 & 0.82$\pm$0.70 & 1.77$\pm$0.92\\
\bottomrule
\end{tabular}
\end{table}

\begin{figure}[!htbp]
\centering
\includegraphics[width=.80\linewidth]{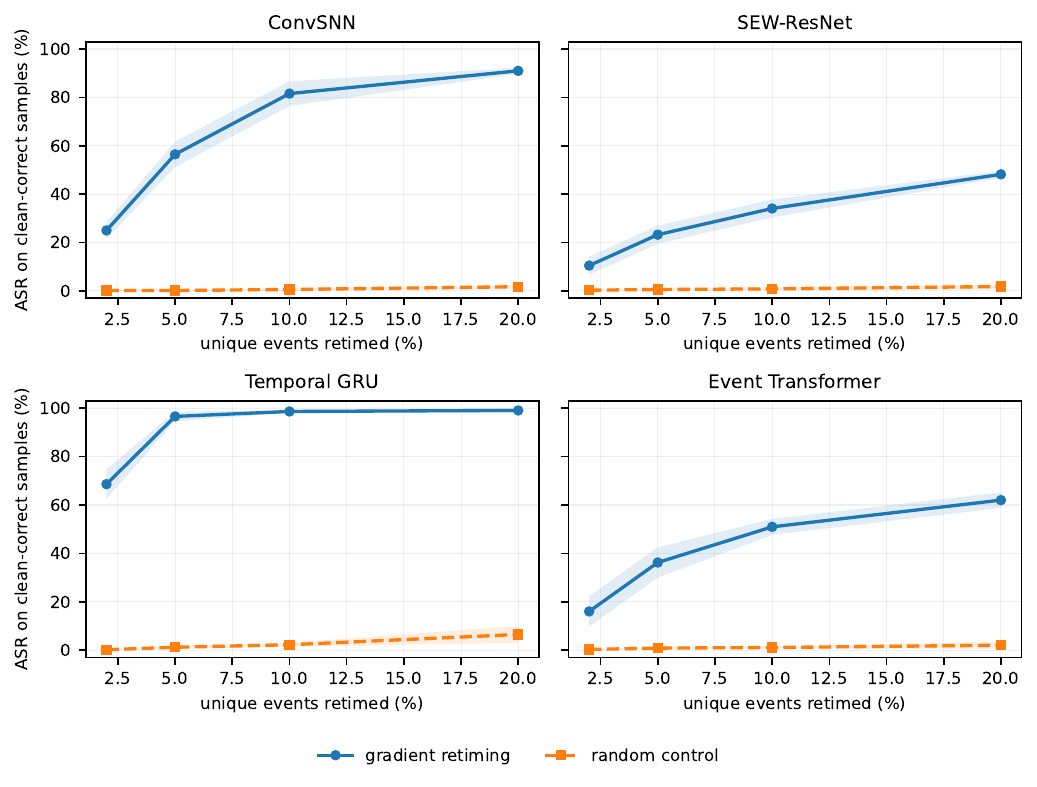}
\caption{DVS Gesture ASR versus the fraction of unique original events retimed. Solid curves show optimized null-space retiming and dashed curves show random controls. GRU and Transformer controls reproduce each sample's signed-shift multiset exactly. ConvSNN and SEW use displacement-conditioned controls. Lines are means over three seeds and bands show $\pm1$ standard deviation.}
\label{fig:real-asr}
\end{figure}

\FloatBarrier
\subsection{Exact displacement-matched random control}
\label{app:controls}
For the temporal GRU and Event Transformer, we use an exact signed-displacement matcher. Let the optimized attack on a sample produce signed fine-bin shifts $\{d_i\}_{i=1}^M$. The control randomly assigns these same shifts to distinct original events for which $s_i+d_i$ remains inside the protected coarse window. The random and adversarial perturbations therefore have the same event budget, signed displacement histogram, mean absolute displacement, maximum displacement, and coarse-window support. Only the selected event identities differ.

Verification is exhaustive over the saved evaluation. For each of the temporal GRU and Event Transformer victims, all $264$ DVS Gesture test samples are checked at four budgets and three seeds. This gives $2\times264\times4\times3=6{,}336$ sample-level matched controls. Every signed histogram matches exactly and the fallback count is zero. Table~\ref{tab:dvs-addon} reports the resulting ASR.

\begin{table}[!htbp]
\caption{DVS Gesture ASR (\%, mean$\pm$std over 3 seeds) for the recurrent and attention-based victims. ``Exact random'' reproduces the adversarial signed-shift multiset sample by sample.}
\label{tab:dvs-addon}
\centering\small
\begin{tabular}{llrrrr}
\toprule
Victim & Attack & 2\% & 5\% & 10\% & 20\%\\
\midrule
Event Transformer & gradient & 16.00$\pm$6.30 & 36.23$\pm$6.22 & 50.96$\pm$3.27 & 62.01$\pm$3.21\\
Event Transformer & exact random & 0.27$\pm$0.23 & 0.81$\pm$0.40 & 1.08$\pm$0.45 & 2.01$\pm$1.37\\
Temporal GRU & gradient & 68.61$\pm$6.07 & 96.63$\pm$2.01 & 98.67$\pm$0.45 & 99.11$\pm$0.90\\
Temporal GRU & exact random & 0.15$\pm$0.26 & 1.18$\pm$0.66 & 2.21$\pm$0.84 & 6.46$\pm$3.47\\
\bottomrule
\end{tabular}
\end{table}

\paragraph{Stronger coarse consumer.}
CoarseFrameFormer operates only on the $20$ aggregated DVS Gesture frames and obtains clean accuracies $93.56\%$, $93.94\%$, and $93.94\%$ over seeds 0--2 ($93.81\pm0.22\%$). During the GRU and Transformer attack runs, both the frame ResNet and CoarseFrameFormer are evaluated on every constrained adversarial tensor. For every seed and budget, their logits are bitwise identical to clean logits and their prediction flip rates are zero.

\FloatBarrier
\subsection{Maximum-shift ablation}
\label{app:shift}
The maximum-shift ablation uses seed 0, the full DVS Gesture test set, a $10\%$ unique-event budget, and the same progressive $2\to5\to10\%$ optimization path as the main attack. Figure~\ref{fig:shift-ablation} reports the attack under restricted temporal movement. Every condition retains exact coarse-frame equality. The unrestricted condition agrees exactly with the main seed-0 result for ConvSNN, SEW, and GRU. The Transformer differs by $1.22$ percentage points. We therefore flag that consistency check and do not use the ablation to validate its main-table value.

\begin{figure}[!htbp]
\centering
\includegraphics[width=.72\linewidth]{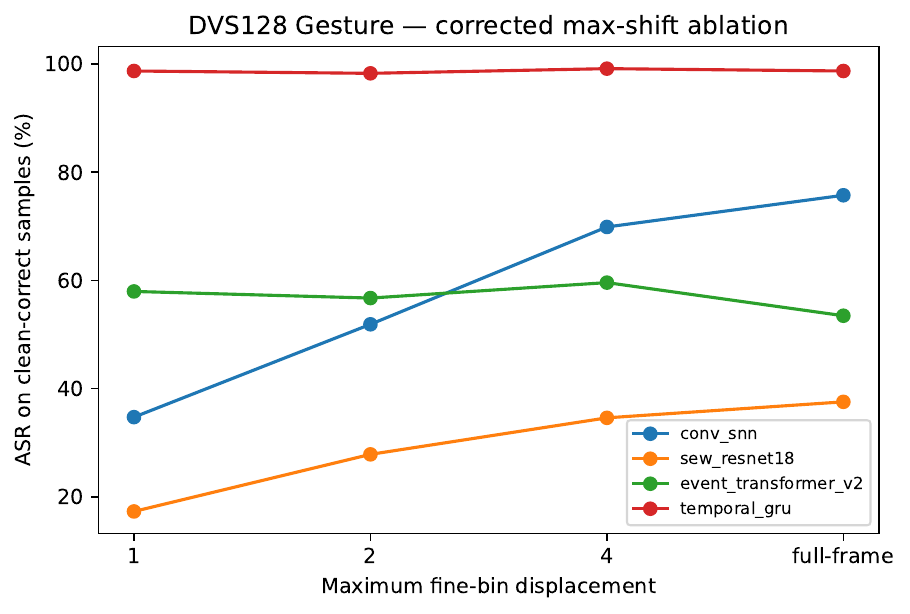}
\caption{DVS Gesture maximum-shift ablation at a $10\%$ unique-event budget (seed 0) using the same progressive path as the main attack. One fine bin corresponds to a sample-dependent physical duration. The mean absolute displacement in the 1-bin condition is $42.46$ ms.}
\label{fig:shift-ablation}
\end{figure}

\begin{table}[!htbp]
\caption{DVS Gesture shift ablation, seed 0. ASR (\%) under the main progressive path.}
\centering\small
\begin{tabular}{lrrrr}
\toprule
Victim & 1 bin & 2 bins & 4 bins & full window\\
\midrule
ConvSNN & 34.73 & 51.88 & 69.87 & 75.73\\
SEW-ResNet & 17.30 & 27.85 & 34.60 & 37.55\\
Event Transformer & 57.96 & 56.73 & 59.59 & 53.47\\
Temporal GRU & 98.70 & 98.27 & 99.13 & 98.70\\
\bottomrule
\end{tabular}
\end{table}

\FloatBarrier
\subsection{Cost-of-stealth diagnostics}
The cost-of-stealth studies are smaller single-seed experiments. The N-MNIST comparison uses four gradient rounds for both arms on the same $100$ samples at a $20\%$ event budget. For the LIF victim, exact-null and free retiming reach $50\%$ and $100\%$ ASR. For the temporal GRU both reach $100\%$ ASR. The exact-null arm has zero coarse-frame difference in both cases, while free retiming reaches maximum integer differences $14$ and $15$.

The CIFAR10-DVS comparison uses the same progressive $1\to2\to5\to10\to20\%$ path for both arms on $250$ samples. At $10\%$, null/free ASR is $96.6/98.9\%$ for ConvSNN, $71.8/85.3\%$ for SEW, $88.1/79.7\%$ for Event Transformer, and $100.0/95.7\%$ for the GRU. Every null-space row has zero coarse difference and bitwise-identical CoarseFrameFormer logits. Free retiming changes the protected prediction on $28.0$--$44.4\%$ of these samples and has maximum integer coarse-frame differences $161$--$192$. Removing the null constraint does not uniformly improve attack efficacy across architectures. In every case, however, it removes the exact protected-representation guarantee in this diagnostic.

\FloatBarrier
\subsection{Attention-based and recurrent victims}
\label{app:strongtransformer}
The Event Transformer, named \texttt{event\_transformer\_v2} in the released code, combines a residual GroupNorm spatial encoder, token projection to $384$ dimensions, a depthwise/pointwise temporal convolutional mixing stage, and an eight-layer, eight-head Transformer encoder. Its clean test accuracies are $92.80\%$, $94.70\%$, and $93.56\%$ over the three seeds ($93.69\pm0.95\%$). The attention-based victim is evaluated at $93.69\pm0.95\%$ clean accuracy, close to the strongest event models in the study.

The DVS temporal GRU provides a non-spiking sequence baseline and a cross-dataset counterpart to the N-MNIST GRU. Its clean accuracies are $87.50\%$, $84.47\%$, and $84.47\%$ ($85.48\pm1.75\%$). Its attack results are reported with the same strict unique-event budget and exact signed-displacement-matched control as the Transformer.

\FloatBarrier
\section{CIFAR10-DVS protocol}
\label{app:cifar}

We use the original CIFAR10-DVS recordings~\citep{li2017cifar10dvs} after removing AEDAT2 trigger and external-event records before DVS decoding. Deterministic $2\times2$ spatial pooling maps the $128\times128$ sensor to $64\times64$. Each recording is represented by $T=80$ fine bins with $S=8$ bins per protected window and ten coarse frames. The split uses $800$ train and $200$ test recordings per class with split seed 2027. The balanced attack subset contains $100$ test recordings per class.

The protected model is CoarseFrameFormer. Temporal victims are ConvSNN, SEW-ResNet-18 style, Event Transformer, and Temporal GRU. Budgets are $\{1,2,5,10,20\}\%$ unique original events. Legal moves remain inside the original coarse window. Controls include uniform random retiming and exact signed-displacement-matched retiming. The protocol also includes a $10\%$ maximum-shift sweep over $\{1,2,4,\mathrm{full}\}$ fine bins and a free-retiming cost-of-stealth diagnostic.

\begin{table}[!htbp]
\caption{CIFAR10-DVS clean accuracy and ASR (\%, mean$\pm$std over three seeds). Random columns use the exact signed-displacement-matched control.}
\label{tab:cifar-results}
\centering\small
\begin{tabular}{lccccc}
\toprule
Victim & Clean acc. & 5\% ASR & 5\% random & 10\% ASR & 10\% random \\
\midrule
ConvSNN & \CIFARConvClean & \CIFARConvASRFive & \CIFARConvRandFive & \CIFARConvASRTen & \CIFARConvRandTen \\
SEW-ResNet & \CIFARSEWClean & \CIFARSEWASRFive & \CIFARSEWRandFive & \CIFARSEWASRTen & \CIFARSEWRandTen \\
Temporal GRU & \CIFARGRUClean & \CIFARGRUASRFive & \CIFARGRURandFive & \CIFARGRUASRTen & \CIFARGRURandTen \\
Event Transformer & \CIFARTransformerClean & \CIFARTransformerASRFive & \CIFARTransformerRandFive & \CIFARTransformerASRTen & \CIFARTransformerRandTen \\
\bottomrule
\end{tabular}
\end{table}

All constrained attack and exact-control rows have zero integer coarse-tensor difference, bitwise-identical CoarseFrameFormer logits, and zero protected-consumer prediction flips.

\FloatBarrier
\section{N-Caltech101 protocol and results}
\label{app:ncaltech}
N-Caltech101~\citep{orchard2015nmnist} is represented with $T=80$ fine bins, eight bins per protected window, and a fixed aspect-preserving letterbox from $240\times180$ to $64\times64$. We use all 101 classes, an $80/20$ split with seed 2027, three training seeds, and a deterministic proportionally stratified attack subset of $1{,}000$ test streams. The victim registry and budgets are the same as CIFAR10-DVS. Table~\ref{tab:ncal-results} reports the complete three-seed result.

\begin{table}[!htbp]
\caption{N-Caltech101 ASR (\%, mean$\pm$std over three seeds). ``Exact'' is the signed-displacement-matched random control.}
\label{tab:ncal-results}
\centering\scriptsize
\setlength{\tabcolsep}{3.4pt}
\begin{tabular}{llrrrrr}
\toprule
Victim & Attack & 1\% & 2\% & 5\% & 10\% & 20\% \\
\midrule
ConvSNN & optimized & 12.08$\pm$0.63 & 19.81$\pm$1.69 & 30.89$\pm$2.09 & 44.48$\pm$1.57 & 60.01$\pm$1.28 \\
ConvSNN & exact & 0.42$\pm$0.09 & 0.73$\pm$0.24 & 1.25$\pm$0.32 & 1.67$\pm$0.60 & 1.57$\pm$0.16 \\
SEW-ResNet & optimized & 7.45$\pm$1.85 & 14.58$\pm$1.03 & 26.89$\pm$1.59 & 36.23$\pm$3.39 & 46.71$\pm$3.56 \\
SEW-ResNet & exact & 0.71$\pm$0.62 & 0.52$\pm$0.54 & 0.80$\pm$0.54 & 1.27$\pm$0.71 & 2.46$\pm$1.08 \\
Transformer & optimized & 26.89$\pm$2.26 & 45.96$\pm$3.36 & 60.10$\pm$2.73 & 70.66$\pm$2.83 & 80.01$\pm$2.18 \\
Transformer & exact & 0.77$\pm$0.08 & 0.88$\pm$0.09 & 1.70$\pm$0.37 & 2.25$\pm$0.35 & 5.33$\pm$0.50 \\
GRU & optimized & 35.52$\pm$1.92 & 59.05$\pm$1.63 & 79.96$\pm$3.57 & 89.48$\pm$3.34 & 93.77$\pm$2.27 \\
GRU & exact & 2.15$\pm$0.32 & 2.82$\pm$0.20 & 6.44$\pm$0.55 & 8.78$\pm$1.35 & 14.68$\pm$1.01 \\
\bottomrule
\end{tabular}
\end{table}

\FloatBarrier
\section{DailyDVS-200 protocol and results}
\label{app:dailydvs}
DailyDVS-200~\citep{wang2024dailydvs} is decoded from the released AEDAT4 recordings and letterboxed from $320\times240$ to $64\times64$. We use $T=80$, eight fine bins per protected window, all 200 classes, three seeds, and a proportionally stratified $1{,}000$-stream attack subset. The split contains 31 training, 9 test, and 7 validation subjects with no subject overlap after the released validation/test overlap is removed from validation. The train/test recording overlap is zero. Table~\ref{tab:dailydvs-results} reports the complete three-seed attack result. ConvSNN clean accuracy is only $13.55\pm0.60\%$, so its ASR is supporting rather than headline evidence. The remaining fine-time victims reach $27.31$--$34.25\%$ clean accuracy on the 200-way task.

\begin{table}[!htbp]
\caption{DailyDVS-200 ASR (\%, mean$\pm$std over three seeds). ``Exact'' is the signed-displacement-matched random control.}
\label{tab:dailydvs-results}
\centering\scriptsize
\setlength{\tabcolsep}{3.4pt}
\begin{tabular}{llrrrrr}
\toprule
Victim & Attack & 1\% & 2\% & 5\% & 10\% & 20\% \\
\midrule
ConvSNN & optimized & 34.39$\pm$5.21 & 50.30$\pm$8.38 & 70.37$\pm$5.20 & 84.29$\pm$6.23 & 94.89$\pm$2.43 \\
ConvSNN & exact & 4.75$\pm$1.60 & 5.75$\pm$0.50 & 7.15$\pm$1.31 & 9.81$\pm$2.01 & 15.81$\pm$5.14 \\
SEW-ResNet & optimized & 11.35$\pm$0.99 & 17.46$\pm$1.32 & 27.11$\pm$2.50 & 34.23$\pm$3.26 & 46.14$\pm$1.83 \\
SEW-ResNet & exact & 3.64$\pm$1.84 & 5.19$\pm$1.34 & 5.62$\pm$0.92 & 9.30$\pm$1.81 & 11.27$\pm$0.77 \\
Transformer & optimized & 65.24$\pm$1.18 & 85.50$\pm$1.07 & 94.34$\pm$1.13 & 98.27$\pm$0.96 & 99.38$\pm$0.84 \\
Transformer & exact & 3.00$\pm$1.04 & 5.12$\pm$1.52 & 8.20$\pm$1.35 & 13.65$\pm$3.18 & 24.75$\pm$1.96 \\
GRU & optimized & 66.33$\pm$1.27 & 87.24$\pm$1.24 & 95.25$\pm$1.11 & 98.88$\pm$0.36 & 99.25$\pm$0.39 \\
GRU & exact & 2.38$\pm$0.81 & 3.75$\pm$0.71 & 7.25$\pm$0.83 & 16.15$\pm$1.42 & 37.43$\pm$1.10 \\
\bottomrule
\end{tabular}
\end{table}

All optimized and exact-control DailyDVS-200 rows preserve the coarse tensor exactly. CoarseFrameFormer logits are bitwise identical and its prediction flip rate is zero in every saved constrained row.

\FloatBarrier
\section{Direct retiming comparison and representation-space metric}
\label{app:direct}
The direct comparison uses the official \citet{yu2026retiming} PIL-$L_0$, free gradient retiming, and our null-space retiming on the DVS128 Gesture ConvSNN. The full run is complete for three seeds (239, 235, and 235 clean-correct samples). For the nonnegative count tensor used here, $\lVert A(X)\rVert_1$ equals the total event-unit count, so $D_A$ in Eq.~(\plaineqref{eq:repdist}) is computed exactly from the saved coarse-frame $L_1$ difference divided by the saved event-unit count. Across seeds, raw ASR is $100.0\pm0.0\%$ for PIL-$L_0$, $95.06\pm0.23\%$ for free retiming, and $81.56\pm5.81\%$ for null-space retiming. Exact-stealth SC-ASR is $0$, $0$, and $81.56\pm5.81\%$, respectively.
For interpretation, define $\mathrm{SPR}_A(\tau)=\Pr[D_A\le\tau\mid\text{clean-correct}]$ and $\mathrm{ASR}_{\mathrm{pass}}(\tau)=\Pr[\text{success}\mid D_A\le\tau,\text{clean-correct}]$. Whenever the latter is defined, $\mathrm{SC\text{-}ASR}_A(\tau)=\mathrm{SPR}_A(\tau)\,\mathrm{ASR}_{\mathrm{pass}}(\tau)$. We also report stealth retention $\mathrm{SC\text{-}ASR}_A(\tau)/\mathrm{ASR}_{\mathrm{raw}}$ when raw ASR is nonzero. At $\tau=0$, null-space retiming has $100\%$ pass rate and retention, while PIL-$L_0$ and free retiming have zero pass rate and retention in the complete seed.

\begin{table}[!htbp]
\caption{Three-seed DVS direct comparison: SC-ASR (\%, mean$\pm$std). The raw row is unconstrained ASR.}
\label{tab:scasr3seed}
\centering\scriptsize
\setlength{\tabcolsep}{4.5pt}
\begin{tabular}{lrrr}
\toprule
$\tau$ & Yu PIL-$L_0$ & Free & Null \\
\midrule
0 & 0.0$\pm$0.0 & 0.0$\pm$0.0 & 81.56$\pm$5.81 \\
0.18 & 5.65$\pm$1.79 & 0.0$\pm$0.0 & 81.56$\pm$5.81 \\
0.19 & 23.42$\pm$1.12 & 0.99$\pm$0.49 & 81.56$\pm$5.81 \\
0.195 & 41.20$\pm$2.06 & 59.93$\pm$2.03 & 81.56$\pm$5.81 \\
0.20 & 62.34$\pm$1.49 & 95.06$\pm$0.23 & 81.56$\pm$5.81 \\
0.205 & 80.11$\pm$4.08 & 95.06$\pm$0.23 & 81.56$\pm$5.81 \\
\midrule
Raw & 100.0$\pm$0.0 & 95.06$\pm$0.23 & 81.56$\pm$5.81 \\
\bottomrule
\end{tabular}
\end{table}

\begin{table}[!htbp]
\caption{$\mathrm{SC\text{-}ASR}_A(\tau)$ (\%) over the full measured post-hoc representation-budget grid for the complete direct-comparison seed. Values in brackets are pointwise $95\%$ Wilson intervals over 239 clean-correct samples.}
\label{tab:scasr}
\centering\scriptsize
\setlength{\tabcolsep}{3.1pt}
\renewcommand{\arraystretch}{0.90}
\begin{tabular}{lrrr}
\toprule
$\tau$ & Yu PIL-$L_0$ & Free retiming & Null-space retiming \\
\midrule
0 & 0.00 [0.00,1.58] & 0.00 [0.00,1.58] & 74.90 [69.03,79.97] \\
0.01 & 0.00 [0.00,1.58] & 0.00 [0.00,1.58] & 74.90 [69.03,79.97] \\
0.05 & 0.00 [0.00,1.58] & 0.00 [0.00,1.58] & 74.90 [69.03,79.97] \\
0.1 & 0.00 [0.00,1.58] & 0.00 [0.00,1.58] & 74.90 [69.03,79.97] \\
0.15 & 0.00 [0.00,1.58] & 0.00 [0.00,1.58] & 74.90 [69.03,79.97] \\
0.17 & 0.84 [0.23,3.00] & 0.00 [0.00,1.58] & 74.90 [69.03,79.97] \\
0.18 & 5.02 [2.90,8.57] & 0.00 [0.00,1.58] & 74.90 [69.03,79.97] \\
0.19 & 23.01 [18.13,28.75] & 1.26 [0.43,3.62] & 74.90 [69.03,79.97] \\
0.195 & 39.33 [33.35,45.65] & 61.92 [55.63,67.85] & 74.90 [69.03,79.97] \\
0.2 & 62.34 [56.05,68.24] & 94.98 [91.43,97.10] & 74.90 [69.03,79.97] \\
0.205 & 80.75 [75.28,85.25] & 94.98 [91.43,97.10] & 74.90 [69.03,79.97] \\
0.21 & 92.47 [88.41,95.18] & 94.98 [91.43,97.10] & 74.90 [69.03,79.97] \\
0.22 & 96.65 [93.54,98.29] & 94.98 [91.43,97.10] & 74.90 [69.03,79.97] \\
0.23 & 99.58 [97.67,99.93] & 94.98 [91.43,97.10] & 74.90 [69.03,79.97] \\
0.24 & 100.00 [98.42,100.00] & 94.98 [91.43,97.10] & 74.90 [69.03,79.97] \\
\bottomrule
\end{tabular}
\end{table}

The seed-0 table above gives Wilson intervals. Table~\ref{tab:scasr3seed} is the completed three-seed headline comparison. The strict point $\tau=0$ is one operating point of the monitor. The complete grid shows how much realized attack efficacy remains as the allowed representation distortion is relaxed. The three-seed raw ASRs are $100.0\%$, $95.06\pm0.23\%$, and $81.56\pm5.81\%$ for PIL-$L_0$, free retiming, and null-space retiming. Their exact-stealth SC-ASRs are $0$, $0$, and $81.56\pm5.81\%$.

\FloatBarrier
\subsection{Temporal locality of the retiming attacks}
\label{app:direct-eps}
The representation budget $\tau$ constrains what is visible to the protected observer but does not constrain how far events move in time. We also report
\[
A(\epsilon)=\Pr[\text{attack succeeds and }d_t(X,X')\le\epsilon\mid\text{clean-correct}],
\]
where $d_t$ is mean absolute timestamp displacement. The unified runner contains five attacks, but the temporal-radius analysis covers Yu PIL-$L_0$, free retiming, and Null. At the seed-0 10\% DVS operating point, mean displacement is $276.9$ ms for Yu, $2.565$ s for free retiming, and $179.5$ ms for Null. Table~\ref{tab:epsball} reports selected radii. Adapted SDA/PDSG and Yao/Gumbel remain in the five-method $\mathrm{SC\text{-}ASR}_A(\tau)$ evaluation but are not included in this completed temporal-radius diagnostic. These values supersede displacement values from an earlier direct-comparison implementation. The representation-space $\tau$ results are unaffected because they are recomputed from the protected tensors.

\begin{table}[!htbp]
\caption{Temporal-radius success for the three timestamp-retiming methods in the unified DVS seed-0 run at the 10\% operating point. Values are $A(\epsilon)$ in percent.}
\label{tab:epsball}
\centering\scriptsize
\setlength{\tabcolsep}{4.2pt}
\begin{tabular}{lrrr}
\toprule
$\epsilon$ (ms) & Yu PIL-$L_0$ & Free retiming & Null-space retiming \\
\midrule
100  & 0.42 & 0.00 & 1.67 \\
200  & 7.11 & 0.00 & 51.88 \\
300  & 68.20 & 0.00 & 74.48 \\
404  & 96.65 & 0.00 & 74.48 \\
600  & 99.58 & 0.00 & 74.90 \\
1000 & 100.00 & 0.42 & 74.90 \\
3000 & 100.00 & 74.48 & 74.90 \\
\bottomrule
\end{tabular}
\end{table}

The $\tau$ and $\epsilon$ analyses answer different questions. The $\tau$ axis measures observer-relative stealth, while $\epsilon$ measures temporal movement. In the unified run, Null remains exactly invariant and requires less temporal movement than free retiming. Yu achieves high temporal-radius success but does not satisfy exact protected equality.

\FloatBarrier
\subsection{Null--C-PGD comparison details}
\label{app:fairness}
The direct Null--C-PGD comparison uses the same victim family and exact-null feasible set. Table~\ref{tab:fairness-audit} separates this comparison from the heterogeneous five-attack evaluation.
\begin{table}[!htbp]
\caption{Threat-model comparison. ``Same'' means identical between Null and C-PGD in the direct exact-null experiment. Prior attacks retain their native constraints and are therefore evaluated post hoc rather than treated as a common feasible-set contest.}
\label{tab:fairness-audit}
\centering\scriptsize
\setlength{\tabcolsep}{2.2pt}
\begin{tabular}{lcccccc}
\toprule
Method & timestamp-only & $x/y/p$ fixed & exact $A$ & event budget & displacement rule & role\\
\midrule
Null & yes & yes & yes & 10\% upper & within window & exact-null optimizer\\
C-PGD & yes & yes & yes & 10\% upper & within window & exact-null optimizer\\
Free & yes & yes & no & $\approx$10\% & native/free & post-hoc evaluation\\
Yu PIL-$L_0$ & yes & yes & no chosen-$A$ constraint & $\approx$10\% realized & native PIL constraints & post-hoc evaluation\\
SDA / Yao & native attack & native attack & no chosen-$A$ constraint & native realized & native & post-hoc evaluation\\
\bottomrule
\end{tabular}
\end{table}
Null and C-PGD use identical attack populations and clean-correct masks within each dataset and seed. DVS uses the 264-clip released test population with 239/235/235 clean-correct clips across seeds. DailyDVS uses the frozen 1,000-stream attack subset with 116/134/131 clean-correct clips. The matched comparison uses three victim-gradient evaluations on DVS and four on DailyDVS. C-PGD uses continuous projected-gradient updates followed by row-simplex projection and integer transport; Null uses discrete greedy move ranking.

The two DailyDVS ConvSNN Null values reported in the paper use the same frozen 1,000-clip attack subset, the same checkpoints, and identical clean-correct sample UIDs: 116/134/131 clips across the three seeds. The broad experiment reports $84.29\pm6.23\%$ at 10\% from a single-pass budget path. The direct Null--C-PGD experiment reports $86.98\pm4.44\%$ after the progressive $1\%\to2\%\to5\%\to10\%$ path with four gradient refreshes, with the defense pipeline enabled. The 2.69-point difference therefore comes from the attack schedule rather than a different evaluation population. The saved manifest is \texttt{attack\_subset\_manifest.csv} (SHA-256 prefix \texttt{7bf9a4c4}).

\FloatBarrier
\subsection{Statistical tests}
\label{app:statistics}
For repeated-clip, repeated-seed results, we use a 20,000-replicate clip-clustered, seed-aware bootstrap. Each replicate resamples original evaluation clip IDs, computes ASR separately within each seed using that seed's clean-correct mask, and then averages the seed rates. Paired method differences use the same resampled clips. On DVS Gesture, Null has 81.56\% exact-null ASR with 95\% CI $[77.24,85.63]\%$, while matched-gradient C-PGD has 84.50\% with $[80.33,88.45]\%$. The paired C-PGD$-$Null difference is 2.94 points with $[0.85,5.12]$. On DailyDVS-200, the corresponding values are 86.98\% $[82.32,91.38]$, 89.55\% $[85.14,93.60]$, and a 2.57-point difference with $[-0.29,5.56]$. Mean$\pm$std across the three training seeds is retained as a descriptive summary. Pooled Wilson intervals remain in saved single-population tables but are not used as primary across-seed uncertainty.

Paired McNemar tests on the complete original direct-comparison seed confirm the raw-ASR differences of free retiming and PIL-$L_0$ versus null-space retiming ($p=5.6\times10^{-10}$ and $1.7\times10^{-18}$). Across CIFAR10-DVS, N-Caltech101, and DailyDVS-200, all $12$ optimized-versus-exact-control comparisons remain significant after Holm correction. The largest adjusted value is $p=0.0119$.

\FloatBarrier
\subsection{Attack-generation cost}
\label{app:cost}
On an NVIDIA H100 80GB, the full DailyDVS-200 progressive $1\to2\to5\to10\to20\%$ attack path takes $467\pm1$ s for ConvSNN, $1490\pm41$ s for SEW-ResNet, $272\pm9$ s for the Event Transformer, and $315\pm10$ s for the GRU across three seeds. These times cover attack generation on the same $1{,}000$-stream subset and exclude model training and preprocessing. The current implementation uses dense fine-time tensors, so its cost increases with temporal and spatial resolution.

For recorded DVS streams, randomized schedules are evaluated on fixed canonical-null attacks. About 97.5\% of realized schedules expose a perturbation that is invisible to the canonical accumulator. Because the victim is not retrained under randomized accumulation, this experiment measures observer visibility rather than complete randomized-consumer robustness. Appendix~\ref{app:controlled} separately evaluates a schedule-aware detector.

\FloatBarrier
\section{Controlled spectral validation}
\label{app:controlled}
The recorded-event experiments establish exact integer invariance on event streams. A controlled study separately isolates the spectral mechanism and randomized accumulation under the assumptions of Section~\ref{sec:theory}. We generate saccade-derived activity over $60$ substeps of $2$ ms, aggregate every ten substeps ($\Dt=20$ ms), and compare a frame CNN with an LIF network at $\tau=4$ ms. The activity tensor in this controlled study is continuous-valued. We use this continuous-valued study only to validate the mechanism. The headline N-MNIST, DVS Gesture, CIFAR10-DVS, N-Caltech101, and DailyDVS-200 experiments use discrete recorded events.

\FloatBarrier
\subsection{Invisible-subspace attack and the price of the constraint}
For the boxcar observer, the discrete invisible subspace contains all perturbations whose fine-time entries sum to zero inside each coarse window. A sinusoidal null carrier is a structured Fourier direction in this space. A white-box null-PGD attack instead optimizes the victim loss and projects each iterate back into the zero-sum subspace. Table~\ref{tab:controlled-attack} gives the complete controlled attack comparison.

\begin{table}[!htbp]
\caption{Controlled attack study (mean$\pm$std over 3 seeds, $1{,}000$ streams/seed). $\delta_F$ is the relative change of the accumulated frame representation. ``Flips'' is the percentage of frame-CNN labels that change. Rows labeled null lie in the protected aggregation null space.}
\label{tab:controlled-attack}
\centering\scriptsize
\setlength{\tabcolsep}{3.8pt}
\begin{tabular}{lrrrr}
\toprule
Attack & LIF acc. (\%) & frame-CNN acc. (\%) & $\delta_F$ & flips \\
\midrule
none & $96.90\pm0.08$ & $97.80\pm0.43$ & -- & -- \\
null tone, $a=2$ & $96.50\pm0.29$ & $97.80\pm0.43$ & $2\times10^{-7}$ & 0 \\
null carrier, $a=2$ & $96.57\pm0.33$ & $97.80\pm0.43$ & $2\times10^{-7}$ & 0 \\
null-PGD, $\epsilon=.10$ & $50.73\pm0.53$ & $97.80\pm0.43$ & $3\times10^{-7}$ & 0 \\
null-PGD, $\epsilon=.25$ & $0.33\pm0.12$ & $97.80\pm0.43$ & $3\times10^{-7}$ & 0 \\
null-PGD, $\epsilon=.50$ & $0.00\pm0.00$ & $97.80\pm0.43$ & $3\times10^{-7}$ & 0 \\
free PGD, $\epsilon=.10$ & $5.43\pm0.57$ & $95.80\pm0.22$ & $0.08$ & $2.1\%$ \\
free PGD, $\epsilon=.25$ & $0.00\pm0.00$ & $90.97\pm0.52$ & $0.19$ & $7.0\%$ \\
\bottomrule
\end{tabular}
\end{table}

Across the $21$ invisible-subspace configurations ($7$ settings $\times$ $3$ seeds, $1{,}000$ streams each), none of the $21{,}000$ attacked streams changes a frame-CNN label. The largest relative logit change is $2\times10^{-7}$. At $\epsilon=.25$, the constrained and free attacks both saturate the fine-time victim, but the free attack has a frame footprint nearly six orders of magnitude larger. At $\epsilon=.10$, the constraint has a measurable efficacy cost, consistent with the smaller real-event diagnostics in Section~\ref{sec:experiments}.

\FloatBarrier
\subsection{Transfer-function mechanism and synchronization error}
At $\tau/\Dt=0.2$, Corollary~\ref{cor:harmonics} predicts LIF gain $|\widehat h(1/\Dt)|\approx0.62$ at a frequency the frame accumulator rejects exactly. Sweeping $\tau/\Dt\in\{0.1,0.2,0.35,0.5,0.75,1\}$ at sub-saturating budgets yields Pearson $r=0.99$ between predicted gain and measured accuracy drop. This dependence vanishes only after the attack saturates the classifier, which is why the mechanism sweep uses smaller budgets.

A Poisson testbed with $\lambda_0=2000\,\mathrm{s}^{-1}$ and $\Dt=10$ ms directly checks Theorem~\ref{thm:poisson}. With a null-frequency modulation of amplitude $0.5$, a two-sample Kolmogorov--Smirnov test over $2\times10^4$ frames gives $p=0.57$ and variance ratio $1.01$. At a non-null frequency the same-amplitude modulation gives $p<10^{-3}$ and variance ratio $1.83$. The integrated rate per null window is unchanged to $2\times10^{-11}$. Eq.~(\plaineqref{eq:rho}) matches the exact stealth--efficacy ratio to $2.0\%$ at relative frequency error $10^{-2}$ and $0.2\%$ at $10^{-3}$. The corresponding ratios are approximately $3.9\times10^3$ and $3.9\times10^5$.

\FloatBarrier
\subsection{Randomized accumulation and schedule-aware detection}
\begin{figure}[!htbp]
\centering
\includegraphics[width=.36\linewidth]{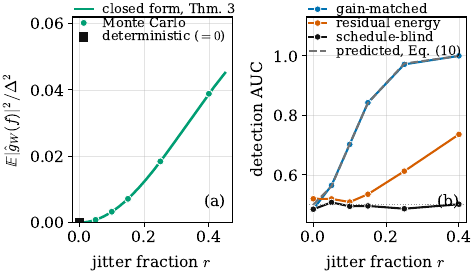}
\caption{Randomized accumulation: expected response energy (left) and schedule-blind, residual-energy, and schedule-aware detection (right).}
\label{fig:defense-main}
\end{figure}

Figure~\ref{fig:defense-main} validates Theorem~\ref{thm:randomized} and Eq.~(\plaineqref{eq:deflection}). Monte Carlo estimates of $\E|\widehat g_W(f)|^2$ agree with the closed form to within $0.25\%$. Over $400$-frame streams at attack amplitude $\alpha=0.4$, a schedule-blind detector remains at chance ($0.49$--$0.50$ AUROC) across the jitter sweep. The residual-energy detector reaches AUROC $0.61$ and $0.74$ at jitter $r=.25$ and $.40$, while the schedule-aware gain-matched detector reaches AUROC $0.97$ and $0.999$. The deflection-based prediction matches the gain-matched AUROC across all $18$ evaluated settings with mean absolute error $0.011$ (worst case $0.03$).

\FloatBarrier
\subsection{Representation refinements and scope}
The location of the null depends on the consumer's temporal representation. At the first null of a $\Dt$-wide boxcar, splitting the interval into $B$ equal sub-bins changes normalized carrier visibility to $|\mathrm{sinc}(1/B)|$. It is $0$ for $B=1$, about $0.64$ for $B=2$, and $0.98$ for $B=10$. Finer voxelization closes this blind spot but introduces higher-frequency nulls, where the LIF response is more attenuated. Eq.~(\plaineqref{eq:rho}) quantifies the trade. An unsigned activity statistic $\int g\,d|N|$ need not share the same polarity-summed null. For randomized widths, the positivity claim requires a distribution that breaks null-preserving lattices, and detection requires access to the realized schedule.

\FloatBarrier
\section{Aligned N-MNIST five-model protocol}
\label{app:nmnist-aligned}
To separate the null-space property from the older N-MNIST model choices, we reran N-MNIST with the same five-model family used in the larger event benchmarks, $T=150$ fine bins and $S=10$ bins per protected window. Clean test accuracy is approximately $99\%$ for CoarseFrameFormer, ConvSNN, SEW-ResNet18, and Event Transformer and $98.5\%$ for the GRU. Every optimized attack row has $D_A=D_\infty=0$.

\begin{table}[!htbp]
\caption{Aligned N-MNIST null-space ASR (\%, mean$\pm$std over three seeds).}
\centering\scriptsize
\begin{tabular}{lrrr}
\toprule
Victim & 10\% & 20\% & 30\% \\
\midrule
ConvSNN & 4.11$\pm$0.32 & 10.00$\pm$0.51 & 18.71$\pm$1.67 \\
SEW-ResNet18 & 1.45$\pm$0.51 & 3.50$\pm$0.24 & 4.81$\pm$0.32 \\
Event Transformer & 2.89$\pm$0.31 & 8.68$\pm$0.98 & 19.61$\pm$4.42 \\
Temporal GRU & 61.23$\pm$4.16 & 90.59$\pm$1.74 & 97.87$\pm$1.19 \\
\bottomrule
\end{tabular}
\end{table}
A max-shift ablation on seed 0 shows that the GRU already reaches $48.1\%$ ASR when every moved event is restricted to one fine bin (2 ms), compared with $56.4\%$ for unrestricted within-window moves at the same $10\%$ budget. This supports a local-timing mechanism rather than a dependence on large timestamp displacements.

\FloatBarrier
\section{DailyDVS-200 architecture stress test}
\label{app:daily-sota4}
We also evaluate protocol-scale variants of ACTION-Net~\citep{wang2021action}, MVFNet~\citep{wu2021mvfnet}, Swin-T~\citep{liu2021swin}, and TimeSformer~\citep{bertasius2021timesformer}. These models are trained from scratch under our common DailyDVS preprocessing. They are \emph{not} released benchmark checkpoints, and published benchmark scores are used only as architectural context. Table~\ref{tab:daily-sota-audit} reports the measured clean accuracy and clean-correct denominators needed to interpret conditional ASR.
\begin{table}[!htbp]
\caption{DailyDVS protocol-scale architecture results at the 10\% exact-null operating point. These are models trained under our common preprocessing, not official benchmark checkpoints. Clean Top-1 is measured on our exact evaluation split. $n_{cc}$ lists clean-correct attacked clips for seeds 0/1/2. All optimized and matched-control rows preserve the protected tensor exactly and give bit-identical protected-consumer logits.}
\label{tab:daily-sota-audit}
\centering\scriptsize
\setlength{\tabcolsep}{3pt}
\begin{tabular}{lrrrr}
\toprule
Victim & Clean Top-1 & $n_{cc}$ (s0/s1/s2) & Null ASR & matched control\\
\midrule
ActionNet & 42.18\% & 412/421/434 & 97.55$\pm$0.28 & 12.27$\pm$2.26\\
MVFNet & 50.21\% & 506/522/507 & 99.28$\pm$0.11 & 9.70$\pm$1.06\\
Swin-T & 22.74\% & 211/247/194 & 99.66$\pm$0.60 & 20.01$\pm$2.94\\
TimeSformer & 37.63\% & 367/330/361 & 97.94$\pm$0.61 & 9.74$\pm$2.61\\
\bottomrule
\end{tabular}
\end{table}
We use this result as a model-strength and architecture stress test under a controlled common representation. We do not claim that our training reproduces the published DailyDVS leaderboard. We report the low Swin-T clean accuracy explicitly. ASR remains conditioned on clean-correct clips for every model.

\FloatBarrier
\section{SHD cross-modality extension}
\label{app:shd}

SHD~\citep{cramer2020heidelberg} is a neuromorphic auditory benchmark and
is used only to test whether the discrete null-space mechanism extends
beyond event vision. We use the same $T=80,S=8$ construction, and all
constrained rows satisfy exact protected equality.

\begin{table}[t]
\caption{\textbf{Cross-modality extension to SHD.}
Three-seed ASR (\%, mean$\pm$std) under exact protected equality.}
\label{tab:shd}
\centering
\small
\setlength{\tabcolsep}{7pt}
\begin{tabular}{lcc}
\toprule
Victim & 10\% budget & 20\% budget \\
\midrule
ConvSNN     & $53.9\pm3.4$ & $74.2\pm3.7$ \\
SEW         & $12.9\pm1.1$ & $23.9\pm1.2$ \\
Transformer & $49.5\pm4.3$ & $67.1\pm2.6$ \\
GRU         & $59.0\pm2.6$ & $73.3\pm1.6$ \\
\bottomrule
\end{tabular}
\end{table}

The SHD result shows the same exact observer invisibility outside event vision, while
vulnerability within the blind space remains architecture dependent.

\FloatBarrier
\section{Unified five-attack and operational-defense details}
\label{app:operational-partial}
The unified DVS run evaluates Yu PIL-$L_0$, Free, Null, adapted SDA/PDSG, and adapted Yao/Gumbel on the same ConvSNN/CoarseFrameFormer population. Seed-0 headline values are reproduced below.
\begin{table}[!htbp]
\caption{Unified DVS seed-0 attack/defense result.}
\centering\scriptsize
\begin{tabular}{lrrrrr}
\toprule
Method & ASR & mean $D_A$ & protected flip & AUROC & UASR@5 \\
\midrule
Null & 74.90 & 0.000 & 0.0 & 0.493 & 73.64 \\
SDA  & 40.59 & 0.010 & 0.0 & 0.494 & 40.59 \\
Free & 94.56 & 0.194 & 3.77 & 0.509 & 92.05 \\
Yu PIL-$L_0$ & 100.00 & 0.198 & 2.51 & 0.432 & 98.74 \\
Yao & 64.02 & 0.523 & 34.73 & 0.932 & 35.56 \\
\bottomrule
\end{tabular}
\end{table}
The clean-only detector is fit on held-out clean clips and evaluated on the attack population. The realized clean FPR for the combined detector is $2.51\%$ for the main DVS rows at the nominal 5\% operating point (Yao: $3.35\%$ under its fold-level calibration). We report the realized FPR together with nominal operating points in the saved tables.

Across the completed three-seed DVS evaluation, Yu/SDA/Yao/Free/Null have AUROC
$0.431/0.495/0.929/0.510/0.492$ and UASR@5
$98.6/34.9/36.1/92.4/79.9\%$, respectively. These aggregate values are the
operational numbers reported in Section~\ref{sec:operational}. The table above
is retained as the seed-0 view.

A controlled Free sweep holds the attack family and approximate 10\% edit footprint fixed while changing only the maximum allowed shift. Mean $D_A$ progresses $0.0418\to0.0772\to0.1422\to0.1759\to0.1889$ for max-shift $2,4,8,16,32$ fine bins. ASR progresses $37.7\to54.4\to64.9\to72.8\to74.1\%$. Detector true-positive rate (TPR) at fixed false-positive rate (FPR) does not increase monotonically. Representation distance therefore does not provide a universal measure of operational detectability in this sweep.

The available DailyDVS unified operational diagnostic contains 119 pooled clean-correct evaluations across three seeds, substantially fewer than the broad 1,000-stream protocol. It shows the same non-monotonic pattern (Null $D_A=0$, Yu/Free around $0.2$, Yao around $0.63$). All but Yao remain near chance on DVS, and all methods are near chance on this smaller Daily reporting population. We treat this as an underpowered diagnostic and do not use it for a main cross-dataset detection claim.

\FloatBarrier
\section{Observer-family ablations}
\label{app:observer-family}
\label{app:ablations}
Table~\ref{tab:observer-family-ablation} summarizes the DVS observer-family experiments. The fixed-attack rows apply the same Null streams to different temporal views. The intersection rows re-optimize the attack while enforcing equality to additional phase observers.

\begin{table}[!htbp]
\caption{DVS observer-family ablations. Top: mean relative representation change for the same canonical-null ConvSNN attacks ($n=239$). Bottom: blind-space size and ASR when the attack is constrained to multiple phase observers.}
\label{tab:observer-family-ablation}
\centering\scriptsize
\setlength{\tabcolsep}{4pt}
\begin{tabular}{lrrr}
\toprule
\multicolumn{4}{l}{\textit{Fixed attack, changed observer}}\\
Observer & \multicolumn{3}{c}{Mean relative change}\\
\midrule
Canonical width-8, phase 0 & \multicolumn{3}{c}{0.0000}\\
Width-8, phase 4 & \multicolumn{3}{c}{0.1357}\\
Width-8, stride-4 overlap & \multicolumn{3}{c}{0.1291}\\
4/8/16-bin multiscale & \multicolumn{3}{c}{0.1435}\\
\midrule
\multicolumn{4}{l}{\textit{Attack constrained to observer intersections}}\\
Observers & Blind dim. (fraction) & ConvSNN ASR & SEW ASR\\
\midrule
Phase 0 & 1,146,880 (87.5\%) & 74.9\% & 38.0\%\\
Phases 0+2 & 983,040 (75.0\%) & 61.9\% & 29.5\%\\
Phases 0+2+4 & 819,200 (62.5\%) & 37.2\% & 20.7\%\\
\bottomrule
\end{tabular}
\end{table}

Exponential moving-average (EMA) summaries are also non-identical under the fixed attacks, although their normalized $L_1$ changes are smaller. The GRU remains near saturation in the intersection experiment, so adding observers does not give a universal robustness guarantee. Randomized schedules expose about 97.5\% of the saved canonical-null DVS attacks, while the schedule-blind visibility statistic remains zero; Appendix~\ref{app:controlled} gives the corresponding detector experiment.

We also verified zero fallback in the signed-displacement matcher, exact integer equality of the protected tensor, bitwise equality of protected-consumer logits, fixed evaluation populations, and clean-only detector calibration. These checks ensure that a change in the protected representation or evaluation population is not counted as stealth.

\FloatBarrier
\section{Constrained projected gradient descent (C-PGD)}
\label{app:cpgd-spec}
C-PGD searches the same unrestricted within-window exact-null feasible set as Null. Let $X_0$ be the clean grouped count tensor, with one row for each protected $(\text{window},p,y,x)$ group and one column for each fine temporal bin. The primary comparison uses one projected-gradient step per progressive budget stage ($s=1$); $s=2$ and $s=4$ are optimization-strength ablations.

\begin{algorithm}[!htbp]
\caption{C-PGD inside the exact-null feasible set}
\label{alg:cpgd}
\begin{algorithmic}[1]
\Require Clean grouped counts $X_0$, label $y$, budget stages $\mathcal B$, steps per stage $s$, nominal step $\eta=1$
\State $Z\gets X_0$
\For{$b\in\mathcal B$}
  \For{$j=1,\ldots,s$}
    \State $g\gets \nabla_Z\mathcal L(Z,y)$
    \For{each protected row $r$}
      \State $g_r\gets g_r-\operatorname{mean}(g_r)$; normalize nonzero $g_r$ by $\|g_r\|_\infty$
    \EndFor
    \State $\eta_{\mathrm{eff}}\gets\eta$; increase it as needed, for at most 20 expansions
    \State $Z\gets\Pi_{\mathrm{row}}(Z+\eta_{\mathrm{eff}}g)$, projecting each row onto $\{z\ge0:\sum_j z_j=\sum_j X_{0j}\}$
    \If{$\tfrac12\|Z-X_0\|_1>b$}
      \State Contract $Z$ toward $X_0$ to the budget boundary
    \EndIf
  \EndFor
\EndFor
\State Integerize row deficits and surpluses by deterministic largest-remainder transport
\State Materialize legal within-window source-to-destination timestamp shifts
\State \Return integer event stream $X'$ with $A(X')=A(X_0)$
\end{algorithmic}
\end{algorithm}

The row-wise mean subtraction removes the component that would change a protected row sum. Simplex projection preserves each row's total mass, and the final transport step converts the relaxed tensor to integer event counts while keeping the requested moved-event budget. Ties use stable index order. The implementation checks nonnegativity, total event count, moved-event budget, and exact coarse equality after integerization.

At $s=1$, Null and C-PGD are matched by victim-gradient evaluations: three through 10\% on DVS Gesture and four on DailyDVS-200. Wall-clock time is not matched because the optimizers perform different non-gradient operations. The comparison tests whether a second optimizer can find damaging retimings in the same exact blind space.

\end{document}